\documentclass[11pt]{article}
 
 \usepackage{parskip}

\usepackage{microtype}
\usepackage{graphicx}
\usepackage{booktabs}
\usepackage{xcolor}

\usepackage{amsmath,amssymb,amsfonts,amsthm,mathtools,bm}
\usepackage{bbm}
\usepackage{braket}
\usepackage[numbers]{natbib}
\usepackage[mathlines]{lineno}

\usepackage{tcolorbox}

\usepackage[ruled,vlined]{algorithm2e}
\SetKw{Return}{return}

\theoremstyle{plain}
\newtheorem{theorem}{Theorem}
\newtheorem{lemma}{Lemma}

\theoremstyle{definition}

\DeclareMathOperator*{\argmin}{argmin}

\usepackage{hyperref}

\newcommand{\pc}{\mathcal{P}}

\newcommand{\rc}{\mathcal{R}}

\newcommand{\lc}{\mathcal{L}}

\newcommand{\utheta}{\underline{\theta}}
\newcommand{\usigma}{\underline{\sigma}}
\newcommand{\hutheta}{\underline{\hat{\theta}}}

\newcommand{\ux}{\underline{x}}

\newcommand{\EX}{ \mathbb{E}}

\newcommand{\Emu}{\underset{\usigma \sim \mu}{ \mathbb{E}} }
\newcommand{\EXp}[1]{\underset{#1}{ \mathbb{E}} }
\newcommand{\Var}[1]{\underset{#1}{\operatorname{Var}} }

\newcommand{\uDelta}{\underline{\Delta}}

\long\def\ca#1\cb{} %Use for commenting out: \ca...\cb

\begin{document}
\title{Computationally sufficient statistics for Ising models}
\author{Abhijith Jayakumar\thanks{Email: abhijithj@lanl.gov}, ~ Shreya Shukla, ~Marc Vuffray, ~Andrey Y. Lokhov,~ Sidhant Misra \\ \\Theoretical Division, Los Alamos National Laboratory, USA}
\date{}
\maketitle
\begin{abstract}
  Learning Gibbs distributions using only sufficient statistics has long been recognized as a computationally hard problem. On the other hand, computationally efficient algorithms for learning Gibbs distributions rely on  access to full sample configurations generated from the model. For many systems of interest that arise in physical contexts, expecting a full sample to be observed is not practical, and hence it is important to look for computationally efficient methods that solve the learning problem with access to only a limited set of statistics. We examine the trade-offs between the power of computation and observation within this scenario, employing the Ising model as a paradigmatic example. We demonstrate that it is feasible to reconstruct the model parameters for a model with $\ell_1$ width $\gamma$ by observing statistics up to an order of $O(\gamma)$. This approach allows us to infer the model's structure and also learn its couplings and magnetic fields. We also discuss a setting where prior information about structure of the model is available and show that the learning problem can be solved efficiently with even more limited observational power. 
\end{abstract} 
\section{Introduction}
Trade-offs between sample and computational complexities of learning tasks has garnered much interest in learning theory (\cite{wein2025computational, agarwal2012computational}). However, in certain cases, the learner's access to the data is itself constrained by limited observational power, i.e., instead of observing full samples, they may only observe some aggregated statistics or low-order moments. This raises a fundamental question: how does this observational power interact with computational tractability?

We study this question in the context of learning Gibbs distributions over discrete variables, where such a setting naturally arises as an interpolation between two sets of prior works. The first are hardness results that show that learning the parameters of such a distribution given only sufficient statistics is a computationally intractable problem \cite{montanari_computational_2015, bresler_hardness_2014}. On the other hand, starting from the breakthrough by Bresler (\cite{bresler2015efficiently}), provably efficient algorithms that use full independent samples from the model are known for this problem. Several other works have also looked at this learning task from other types of data models like samples drawn from a dynamic (\cite{gaitonde2023unified}) or from a metastable distribution (\cite{jayakumar_discrete_2025}). All the efficient algorithms here assume that full sample configurations of the model can be observed. This motivates us to look for an intermediate regime that has been left unexplored by the aforementioned literature. We ask the question: 

\emph{Is it possible to  learn a discrete graphical model from a limited set of statistics, such that the computational effort scales as a polynomial in the number of variables and in the data size?}

We initiate the study of this question in the Ising class of models.
Here the distribution is over $p$  binary spins, $\usigma \in \{-1,1\}^p$. The distribution that we are attempting to learn is a Gibbs distribution associated with a general quadratic function over these variables,
\begin{align}
    \mu(\usigma) &\propto \exp(E(\usigma; \utheta^*)),\\
    E(\usigma; \utheta^*) &= \sum_{u \neq v} \theta^*_{u,v} \sigma_u \sigma _v + \sum_u \theta^*_u \sigma_u. \label{eq:basic1}
\end{align}
Following previous works, our aim here will be to learn the parameters $\utheta^*$. We will assume that we know an upper bound on the \emph{$\ell_1$-width} of the model. That is, we know a $\gamma$ such that, $\sum_v  |\theta^*_{uv}| + |\theta^*_u| \leq \gamma, ~~\forall u \in [p] .$ 

We refer to the expectation values of monomials ($\EX \sigma_i$, $\EX \sigma_i \sigma_j, ~ \EX \sigma_i\sigma_j\sigma_k, \ldots$) as to the \emph{statistics} of this model. It is well known that the parameters of the model are entirely fixed by the first ($\EX \sigma_i$) and second order ($\EX \sigma_i \sigma_j$) statistics. However, performing the actual inversion from these statistics to the parameter has been generally proven to be a hard problem (\cite{montanari_computational_2015, bresler_hardness_2014}). Efficient estimators which are based on observing full sample configurations, on the other hand, can be implicitly viewed as having complete access to all the statistics of the model (up to order $p$). In this paper we will consider a setting where the learner is given access to only monomials up to some degree $d$.

For solving this learning problem, we will use a modification of the \emph{Interaction Screening Estimator} (ISE). The ISE, first introduced in \cite{vuffray2016interaction}, is an M-estimator that works by minimizing a convex loss function for every variable $u \in [p]$ subject to constraints on the $\ell_1$-width of the model. Given $n$ samples from the true distribution,

\begin{equation}
\lc_{u,n}(\utheta) \equiv \frac1n \sum_{t =1}^n e^{-\sigma^{(t)}_u(  \theta_u + \sum_{v\neq u} \theta_{u,v} \sigma^{(t)}_v  )},
\end{equation}
\begin{equation}
   \hutheta_u = \min_{\utheta: |\utheta| \leq \gamma} \lc_{u,n}(\theta).
\end{equation}

This constrained optimization problem is solved for each of the $p$ variables in the model to estimate the parameters in the energy function connected to it. It has been shown to efficiently recover the parameters and structure of Ising models and other discrete graphical models \cite{vuffray2016interaction, vuffray2019efficient}. As shown in these works, this estimator requires exact configurations to be evaluated. However, the exponential function, combined with the $\ell_1$ constraint, suggests the possibility of approximating the exponential function with a low-degree polynomial.

This idea is explored briefly in \cite{lokhov2018optimal} by expanding the interaction screening loss to second-order. At sufficiently low $\gamma$ this second-order approximation can be expected to approximate the full IS estimator well. But this strategy is bound to fail at higher values of $\gamma$. Moreover, expanding the exponential as a polynomial can potentially lead to a loss of convexity, and in turn to the loss of learning guarantees the output of ISE. If we could find a provably effective way to approximate the exponential in the interaction screening method using, say, a degree $d$ polynomial, then that implies that the parameters of the model can be estimated using statistics up to order $O(d)$. 

In this work, we propose to approximate the gradient of the IS loss using a polynomial approximation. This allows us to analyze this problem as a convex optimization problem with a corrupted gradient oracle. From the known robustness properties of gradient descent, we show that we can get close the true optimum using this approximate gradient. Our results show that computing statistics up to order  $O(\gamma)$ from $O(e^{8 \gamma} \gamma^5\log(p))$ observations is sufficient to match learning guarantees from the setting where full configurations can be observed. 

Our results also give rise to a natural question about the corresponding lower bounds, i.e., what is the lowest order of statistics that will allow us to tractably learn the model. Our results point to the existence of a type of information-computation tradeoff in this learning problem. In particular, the power of statistics between information-theoretically sufficient (order $2$) and that is proven in this work to be computationally sufficient (order $O(\gamma)$) is currently not quantified in literature.

\subsection{Further Related Work}
\paragraph{Learning from statistical queries.}
The Statistical Query (SQ) model, introduced by Kearns (\cite{Kearns1998}), formalizes learning scenarios in which the algorithm accesses data only through expectations of bounded functions rather than individual sample. Follow-up works have shown that many learning problems that are information-theoretically solvable become computationally intractable when restricted to SQ access alone~(\cite{Feldman2013,Feldman2017}). These results show that the form in which data is observed can play a crucial role in determining computational feasibility.
Our setting differs from the classical SQ model in two aspects. First, we consider access to a fixed collection of low-degree moments rather than arbitrary adaptively chosen queries. Second, we allow these moments to be reused across multiple optimization steps. Nonetheless, both settings emphasize how restrictions on observable statistics affect the computational complexity of learning.
\paragraph{Mean-field heuristics.} An approach to the inverse Ising problem is to replace the true Gibbs measure with a tractable mean-field approximation, yielding explicit estimators for the parameters of the model purely  from empirical statistics (\cite{nguyen2017inverse}). In the simplest mean field reconstruction, inverting the correlation matrix gives a closed form estimate for the parameters of the models.
More accurate variants incorporate linear-response corrections and, in particular, the Onsager reaction term (\cite{ricci2012bethe, tanaka1998mean}). These methods are attractive for their simplicity, though they are ultimately uncontrolled approximations which fail at larger $\gamma$. 
\paragraph{Continuous variables and quantum generalizations.}
It is natural to consider the same problem in other types of alphabets. For continuous variable models, learning Gibbs distributions can be solved computationally efficiently using score matching, which can be viewed as morally equivalent to Interaction Screening or Pseudo-likelihood methods as it breaks down the global learning problem into single variable problems \cite{pabbaraju2023provable, hyvarinen2005estimation}. However, score matching does not require access to full samples to learn. For a distribution specified by a degree-$d$ polynomial energy function, score matching loss only requires access to statistics up to order $2d$. This shows an important distinction between continuous and discrete alphabet learning scenarios.

The quantum generalization of the Gibbs learning task has been shown to be tractable in recent works \cite{bakshi2024learning, narayanan2024improved}. These algorithms work by using a sum-of-squares hierarchy to match certain statistics of the true quantum gibbs state with those of a parametric hypothesis. This implicitly suggests, even in the classical setting, that the learning problem is solvable using a set of statistics strictly smaller than the complete set. However, in the quantum setting the structure of the model is assumed to be known and sparse. In the classical setting, this makes the learning problem manifestly tractable using some lower order moments that depend only on the sparsity. This setting is discussed in Section \ref{sec:physics_models}.  These considerations show that the minimal set of statistics required to solve such a problem will also depend strongly on the prior information we have about the true model.

\paragraph{Notation:}
We use underlines (e.g., $\usigma,\utheta,\ux$) to denote vectors. For $p\in\mathbb{N}$ we write $[p]=\{1,\dots,p\}$. The Ising configuration is $\usigma\in\{-1,1\}^p$ with coordinates $\sigma_u$ for $u\in[p]$. Parameters are $\utheta^*=(\{\theta^*_{u,v}\}_{u\neq v},\{\theta^*_u\}_u)$, with local parameter vector at node $u$ denoted $\utheta_u=(\theta_u,\{\theta_{u,v}\}_{v\neq u})$. Expectations under the true model are denoted $\EX[\cdot]$; empirical averages from $n$ i.i.d. samples are $\EX_n[\cdot]$.

\section{Interaction screening with approximate gradients}

This is section we will give the main algorithm and its performance guarantees. We split the learning task into three parts:
\begin{enumerate}
    \item  Parameter learning of the second order terms ($\theta^*_{u,v})$ (Theorem \ref{thm:main_thm})
    \item Structure learning, i.e., finding the graph structure encoded in the $\theta_{u,v}^*$ terms (Theorem \ref{thm:struct_learn}) 
    \item Parameter learning of linear terms ($\theta_u^*$ ) (Theorem \ref{thm:mag_learn}). 
\end{enumerate}
We show that these tasks can be solved sequentially given access to data in terms of empirical estimates of the statistics of the model up to $O(\gamma)$. Theorems that quantify the error propagation between these tasks are derived in this section.

We denote by $E_u(\usigma, \utheta_u) =\sigma_u (\theta_u + \sum_{j \neq u} \theta_{uj} \sigma_j)$, a parametric ``local'' energy function. This, in effect, determines the parametrization of the single variable conditionals of the model. 
%$$ \lc_u(\utheta_u) = \EXp{\usigma \ \sim \ \mu} e^{- E_u(\usigma, \utheta_u)} $$
We  define a set of reduced statistics of the model estimated from $n$ independent observations, given by  $S_{D,n} = \{ \EX_n[\prod_{u \in K} \sigma_u] ~ |~ K \in 2^{[p]},~ |K| \leq D \}$. For $D < p$, this set models an observer of limited observation capacity, who nonetheless records independent observations in terms of statistics. Here we use the notation $\EX_n$ to denote an expectation value computed from $n$ i.i.d samples of the model. It is important to note that  we \emph{do not} require every element in the set $S_{D,n}$ to be computed from the \emph{same} $n$ samples.

To estimate  the $\theta_{u,v}^*$ parameters we use simple projected gradient descent method as outlined in Algorithm \ref{alg:ais-moments}.  The performance guarantees on the output of this algorithm is stated and proven in the Theorem below.

\begin{algorithm} [t]
% 1. Draw Top Line
\hrule height 1pt
\vspace{1mm}

% 2. Caption goes here
\caption{Interaction Screening with Moment-Based Approximate Gradients}
\label{alg:ais-moments}

% 3. Draw Separator Line
\vspace{1mm}
\hrule height 0.4pt
\vspace{2mm}

% Your existing algorithm2e setup
\SetAlgoLined
\DontPrintSemicolon

% Inputs
\textbf{Input:} Degree $d$, Set of statistics $S_{d +2 - (d\bmod 2),n}$,
$\ell_1$ bound $\gamma$\;, Step size $\eta$, Iterations $T$\;

\textbf{Output:} Estimated parameters $\hutheta_u$ for every $u \in [p]$

\BlankLine

\For{$u = 1$ \textbf{to} $p$}{
    Initialize $\utheta^{(1)}_u = \{ \theta_u^{(1)}, \theta_{u,1}^{(1)}, \ldots, \theta_{u,p}^{(1)} \} \leftarrow 0$\;
    
    \For{$t = 1$ \textbf{to} $T$}{
        \tcp{Gradient Approximation}
        Construct $\tilde{\nabla}\lc_{u,n}(\utheta^{(t)}_u)$ from $S_{ d+ 2 - (d \bmod 2),n}$ as follows:\;

        \Indp
        \begin{minipage}{\linewidth}
        \begin{tabular}{@{}p{2.2em}p{\dimexpr\linewidth-2.2em\relax}@{}}
        (i)   & Replace $e^{-E_u(\usigma,\utheta^{(t)}_u)}$ in the exact gradient by the degree-$d$ approximation as in \eqref{eq:grad_taylor}.\\
        (ii)  & Expand the resulting moments using \eqref{eq:exp3}.\\
        (iii) & Substitute each monomial expectation by its empirical estimate from $S_{d+ 2 - (d \bmod 2), n}$.\\
        \end{tabular}
        \end{minipage}
        \Indm 
        
        \tcp{Gradient Step and $\ell_1$ projection}
        $\utheta^{(t + 1)}_u \leftarrow  \pc_{\|\cdot\|_1 \le \gamma}(\ \utheta^{(t)}_u - \eta \tilde{\nabla}\lc_{u,n}(\utheta^{(t)}_u) \ )$\;
    }
    
    \tcp{Averaging}
    $\hutheta_u \leftarrow \frac{1}{T} \sum_{t=1}^{T} \utheta^{(t)}_u$\;
}
\Return{$\hat{u}_\theta$}
%\textbf{return} $\hutheta$
% 4. Draw Bottom Line
\vspace{2mm}
\hrule height 1pt
\end{algorithm}

\begin{theorem}
\label{thm:main_thm}
Suppose we want to achieve error $\epsilon \leq 4\gamma$ on the $\theta^*_{u,v}$ parameters in the following metric,
\[
\max_{u\in [p], v \neq u} |\hat{\theta}_{u,v} - \theta_{u,v}^*| \leq \epsilon.
\]
Then Algorithm \ref{alg:ais-moments}, with parameters $T =\left \lceil\frac{576 \ p \gamma^2 e^{8\gamma} (1 + \gamma)^2}{\epsilon^4} \right\rceil$,  $\eta = \frac{2 \gamma e^{-\gamma}}{3 \sqrt{pT}}$, data in the form of $S_{d+2-(d\bmod 2),n}$  with $d = \left\lceil \frac{3 \gamma + \log(\frac{16 \gamma(1 + \gamma)}{\epsilon^2})}{ W( \frac{3}{e} + \frac{\log(16 \gamma(1+\gamma)/\epsilon^2)}{\gamma e})} \right\rceil - 1$  and $ n = \left\lceil 2^7\ e^{8 \gamma } \ \gamma^2 (1 + \gamma)^2\frac{(d+2)\log(ep) + \log(\frac{2p(d+2)}{\delta})}{\epsilon^4} \right\rceil$
returns such an estimate with probability at least $1 - \delta$ for every $u \in [p]$.
\end{theorem}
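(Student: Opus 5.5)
The plan combines three ingredients: the known analysis of the population interaction screening loss, the robustness of projected subgradient descent to bounded gradient errors, and a truncated Taylor bound that controls the gap between the approximate and exact gradients.

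\textbf{Step 1: what the optimization must achieve.} Fix $u$ and write $\lc_u(\utheta_u)=\EX\, e^{-E_u(\usigma,\utheta_u)}$ for the population IS loss, which is convex. Its minimizer over the $\ell_1$ ball of radius $\gamma$ is $\utheta^*_u$, since $\nabla \lc_u(\utheta^*_u)=0$ by the interaction screening identity. I will use the standard restricted strong convexity argument of \cite{vuffray2016interaction,lokhov2018optimal}. It says that any $\utheta_u$ in the ball with $\lc_u(\utheta_u)-\lc_u(\utheta^*_u)\le \kappa$ satisfies
\[
\max_{v}|\theta_{u,v}-\theta^*_{u,v}|\lesssim e^{\gamma}\sqrt{\kappa(1+\gamma)}.
\]
This uses the fact that the conditional variance of $\sigma_v$ given the rest is at least of order $e^{-2\gamma}$, together with the lower bound $e^{-E_u}\ge e^{-2\gamma}$ on the local curvature. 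So it suffices to drive the suboptimality to $\kappa\asymp \epsilon^2 e^{-2\gamma}/(1+\gamma)$ or so. This is where the $\epsilon^4$ in $T$ and $n$ comes from.

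\textbf{Step 2: biased projected gradient descent.} I will use the textbook bound for projected (sub)gradient descent with an inexact oracle $\tilde g_t=\nabla\lc_u(\utheta^{(t)})+b_t$, applied to the averaged iterate. Its $\ell_2$ diameter is at most $2\gamma$, and $\|\tilde g_t\|_2\le G$ with $G\lesssim \sqrt{p}\,e^{\gamma}$, since each coordinate is bounded by roughly $e^{\gamma}$. The bound is
\[
\lc_u(\hutheta_u)-\lc_u(\utheta^*_u)\le \frac{(2\gamma)^2}{2\eta T}+\frac{\eta G^2}{2}+\frac1T\sum_t \langle b_t,\utheta^{(t)}-\utheta^*_u\rangle ,
\]
and the last term is at most $2\gamma\max_t\|b_t\|_\infty$ by $\ell_1$/$\ell_\infty$ duality. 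The stated $\eta$ balances the first two terms to $O(\gamma e^{\gamma}\sqrt{p/T})$, and the stated $T$ makes this at most $\kappa/2$. It remains to make $\max_t\|b_t\|_\infty\le \kappa/(4\gamma)$.

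\textbf{Step 3: controlling the bias.} Split $b_t$ into a truncation part and a sampling part. The gradient coordinates are $-\EX[\sigma_u\sigma_v e^{-E_u}]$ (and $-\EX[\sigma_u e^{-E_u}]$ for the field).
\begin{itemize}
\item \emph{Truncation.} Replace $e^{-x}$ by $\sum_{k\le d}(-x)^k/k!$. Since $|E_u|\le\gamma$, the remainder is at most $\gamma^{d+1}e^{\gamma}/(d+1)!$. The choice of $d$ via the Lambert $W$ function is exactly the solution of $(e\gamma/(d+1))^{d+1}e^{\gamma}\lesssim \epsilon^2/(16\gamma(1+\gamma))$, after using Stirling's bound $(d+1)!\ge((d+1)/e)^{d+1}$. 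The approximated coordinate is a polynomial of degree $d+2$ in $\usigma$; using $\sigma^2=1$ and parity, only monomials of degree at most $d+2-(d\bmod 2)$ appear, which explains the set $S$.
\item \emph{Sampling.} Expanding $(\sum_j\theta_{uj}\sigma_j+\theta_u)^k$ via \eqref{eq:exp3} writes the coordinate as $\sum_k\frac{(-1)^k}{k!}\sum_{\text{multi-indices}}\theta^{\alpha}\EX[\text{monomial}]$. If every monomial estimate is within $\tau$ of its mean, the error is at most $\tau\sum_k \|\utheta\|_1^k/k!\le \tau e^{\gamma}$, because the $\ell_1$ constraint is maintained by the projection.
\end{itemize}
Hoeffding's inequality together with a union bound over the at most $(ep/(d+2))^{d+2}\cdot(d+2)$ monomials gives $\tau\lesssim\sqrt{((d+2)\log(ep)+\log(2p(d+2)/\delta))/n}$. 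This holds uniformly for all $t$ and all $u$ simultaneously, since the monomial estimates are fixed and reused, so no adaptivity issue arises. With the stated $n$, $\tau e^{\gamma}$ is below the required threshold.

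\textbf{Main obstacle.} I expect the hardest part to be Step 1, namely getting the explicit constants of the curvature bound relating loss suboptimality to $\ell_\infty$ parameter error, in a form independent of $p$ and compatible with the constants in $T$, $d$ and $n$. I would first try to adapt the restricted strong convexity lemma of \cite{vuffray2016interaction} in its population form, rather than rederiving it. The remaining steps are routine bookkeeping.
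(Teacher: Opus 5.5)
Your architecture matches the paper's proof. It uses robust projected GD on the $\ell_1$ ball, where an $\ell_\infty$ gradient bias costs $2\gamma\max_t\|b_t\|_\infty$. It uses a Taylor/multinomial expansion of the gradient with Hoeffding and a union bound over the reused monomials. It then converts the loss gap into $\max_v|\Delta_{uv}|$ via curvature. The GD and sampling steps are fine. The gap is in the curvature step you defer, and it matters because the theorem asserts explicit $T$, $d$, $n$.

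Your bound $\max_v|\theta_{u,v}-\theta^*_{u,v}|\lesssim e^{\gamma}\sqrt{\kappa(1+\gamma)}$ is false for large $\gamma$. Take two independent spins with fields $\gamma-1$ and $\theta^*_{uv}=0$. Along the feasible direction $\Delta_u=-\tanh(\gamma-1)\Delta_{uv}$ the loss gap is $\tfrac12\mathrm{sech}^3(\gamma-1)\Delta_{uv}^2(1+o(1))\approx 4e^{3}e^{-3\gamma}\Delta_{uv}^2$. So nothing better than order $e^{3\gamma/2}\sqrt{\kappa}$ can hold. Your own ingredients (weight $e^{-E_u}\ge e^{-2\gamma}$ times conditional variance $\ge e^{-2\gamma}$) give only $e^{2\gamma}$. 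That forces $\kappa\sim\epsilon^2e^{-4\gamma}$ and $T,n\propto e^{10\gamma}$, which the stated parameters do not cover.

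The missing idea is to put the weight at the true parameter. Since $e^{-E_u(\usigma,\utheta^*+\uDelta)}=e^{-E_u(\usigma,\utheta^*)}e^{-E_u(\usigma,\uDelta)}$, one gets $\delta\lc^*=\EX[e^{-E_u(\usigma,\utheta^*)}g(E_u(\usigma,\uDelta))]$ with $g(x)=e^{-x}-1+x\ge x^2/(2+|x|)$. Using $|E_u(\usigma,\uDelta)|\le 2\gamma$, this gives $\delta\lc^*\ge\frac{e^{-\gamma}}{2+2\gamma}\EX[E_u(\usigma,\uDelta)^2]$. Next, $\EX X^2\ge\operatorname{Var}X$ discards $\Delta_u$. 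The law of total variance with $\operatorname{Var}(\sigma_v\mid\usigma_{\setminus v})\ge e^{-2\gamma}$ then yields Lemma~\ref{lem:strong_conv_exact}. A mean-value Hessian at a point of the segment $[\utheta^*,\hutheta]$, which stays in the ball, would also give weight $e^{-\gamma}$. Only the resulting $\varepsilon=\epsilon^2e^{-3\gamma}/(2(1+\gamma))$, with $L=3\sqrt{p}\,e^{\gamma}$, makes $\lceil 16\gamma^2L^2/\varepsilon^2\rceil$ reproduce the stated $T$ and $2\gamma/(L\sqrt T)$ the stated $\eta$.

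Second, your truncation estimate is too lossy for the stated $d$. Lagrange's remainder $\gamma^{d+1}e^{\gamma}/(d+1)!$ combined with Stirling gives $e^{\gamma}(e\gamma/(d+1))^{d+1}$. The stated $d$ is the least integer with $(d+1)\bigl(\log\frac{d+1}{\gamma}-1\bigr)\ge 3\gamma+\log\frac{16\gamma(1+\gamma)}{\epsilon^2}$. Equivalently, $(e\gamma/(d+1))^{d+1}\le\epsilon^2e^{-3\gamma}/(16\gamma(1+\gamma))=\varepsilon/(8\gamma)$ with the correct $\varepsilon$. There is no room for the extra $e^{\gamma}$.

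Your description of $d$ as solving a target without the $e^{-3\gamma}$ is also off, since the $3\gamma$ in its numerator encodes that factor. The too-optimistic $\kappa$ is what hides the mismatch in your bookkeeping. The paper instead bounds the tail directly: $\sum_{k>d}\gamma^k/k!=e^{\gamma}\Pr[\mathrm{Poisson}(\gamma)\ge d+1]\le(e\gamma/(d+1))^{d+1}$ by the Chernoff bound.

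With both fixes your argument closes. For the stated $n$, $\tau e^{\gamma}\le\varepsilon/(8\gamma)$, so $\varepsilon_{poly}+\varepsilon_{stat}\le\varepsilon/(4\gamma)$ as Lemma~\ref{lem:robust_gd} requires.
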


 The $W$ in the expression for $d$ is the Lambert-$W$ function in the principal branch. From the $\epsilon \leq 4 \gamma$, it can be seen that the argument of this function in the expression for $d$ always remains greater than $3/e$. Since $W$ is a increasing function in the positive real axis, the necessary moment order for this to work scales as $d = O(\gamma)$ for a fixed $\epsilon$. The $n$ here is the number of independent observations the learner has to make to estimate the statistics. This shows the same asymptotic $O(e^{8\gamma}poly(\gamma) \frac{\log(p)}{\epsilon^4})$ sample complexity scaling observed for Interaction Screening with full samples in  \cite{vuffray2019efficient}. Hence, the $O(\gamma)$ restriction of the observation power of the learner does not degrade the sample complexity of the algorithm.

The total number of steps the algorithm takes is $pT = O(p^2 \gamma^4e^{8\gamma}/\epsilon^4)$, with the dominant cost of each step being $O(|S|) \sim p^\gamma$ time taken to process the data given to the algorithm to construct approximate gradients. Hence overall the computational cost of the algorithm remains a polynomial in the size of the model ($p$).

\paragraph{Proof of Theorem \ref{thm:main_thm}.}
The main idea of the proof is as follows. Consider the ideal interaction screening loss function given by 
\begin{align} \label{eq:ideal_is}
    \lc_u(\theta) = \EX\left[ e^{-E_u(\usigma, \utheta_u)}\right].
\end{align}
For each $u$, the ideal IS loss function in \eqref{eq:ideal_is} is convex and is known to have its minimum at the true model parameters $\utheta^*$, see \cite{vuffray2016interaction}. Gradient descent for minimizing convex functions has the crucial property of \emph{robustness}, i.e. bounded errors incurred in the gradient computation translate favorably in a controlled way to errors in the final loss function \cite{bubeck_convex_2015} (also see Lemma \ref{lem:robust_gd}). Using this fact and by bounding the error incurred by computing the approximate gradients as in Algorithm~\ref{alg:ais-moments} we can prove Theorem~\ref{thm:main_thm}. % This property of GD is not shared by some of its faster cousins like the GD with momentum. 

The error in the approximate gradient computation is from two sources. The first is a \emph{polynomial approximation error} ($\varepsilon_{poly}$) incurred by approximating the exponential by a finite degree polynomial. The second is a \emph{statistical approximation error} ($\varepsilon_{stat}$) representing the finite sample error from replacing the expectation values by their $n$ sample average.  
% requires us to observe full i.i.d samples from the model. However, we will show that by using a polynomial approximation for the exponential function, we can approximate the gradient using only statistics up to $O(\gamma).$% To this end, we first quantify the error induced by approximating the exponential function by using a Taylor polynomial of degree $d$.
%\begin{lemma}[Polynomial approximation for exponential]
% Let $p_d(x) \equiv \sum_{k = 0}^d \frac{(-x)^k}{k!}.$ Then for $d+1 > \gamma$, we can guarantee that,  $\sup_{x \in [-\gamma, \gamma] }|e^{-x} - p_d(x)| \leq e^{-(d+1)(\log(\frac{d+1}{\gamma})-1)}.$
%\label{lem:poly_exp}
%\end{lemma}

We first describe how to obtain the approximate gradients using only moments up to order $d$ from the data. The exact gradients of the loss function $\lc_u$ are,
\begin{align} 
 \dfrac{\partial \lc_{u}(\utheta_u)}{\partial \theta_{u}} &= -\EX\left[\sigma_u e^{-E_u(\usigma, \utheta_u)}\right];  \ \ \dfrac{\partial \lc_{u}(\utheta_u)}{\partial \theta_{uv}} = -\EX\left[ \sigma_u \sigma_v e^{-E_u(\usigma, \utheta_u)} \right],~~ u\neq v, \ \ 
\end{align}
Going forward, we will focus on $\frac{\partial \lc_{u}(\utheta_u)}{\partial \theta_{uv}}$ to demonstrate how the polynomial approximation works. The approximation to $\frac{\partial \lc_{u}(\utheta_u)}{\partial \theta_{u}} $ is mechanically the same. 

We begin by approximating the exponential function  using a polynomial approximation 
\begin{equation} 
\label{eq:grad_taylor}
\EX\left[ \sigma_u \sigma_v e^{-E_u(\usigma, \utheta_u)} \right]
=  \sum_{k \leq d} \dfrac{(-1)^k\EX[\sigma^{k+1}_u \sigma_v  (\sum_{j \neq u} \theta_{uj} \sigma_j + \theta_u)^k ]}{k!} 
+ \varepsilon_{poly}
\end{equation}

Here we have used $\varepsilon_{poly}$  to denote the truncation error. Later in the proof we will set $d$ by tuning this quantity via Lemma \ref{lem:poly_exp}.

Now we will expand the term in the expectation value using the multinomial expansion. For this purpose define,  
$$\mathcal{M}_k \equiv \{ (m_1, \ldots, m_p) \in \mathbb{N}^p | \sum_i m_i = k  \} \qquad  \text{and} \qquad  C_{\utheta, \underline{m}} \equiv \frac{k! \ \ \theta_u^{m_u} \prod_{j \ne u} \theta_{uj}^{\,m_j}}{\prod_{j \in[p]} m_j!}.$$
Using these we can show that,
\begin{equation}
\label{eq:exp3}
\EX[\sigma^{k+1}_u \sigma_v  (\sum_{j \neq u} \theta_{uj} \sigma_j + \theta_u)^k ]  =
\sum_{\underline{m} \in \mathcal{M}_k}  C_{\utheta, \underline{m}} \,\EX\!\left[
\sigma_v \sigma_u^{\,k+1} \prod_{j \ne u} \sigma_j^{\,m_j}
\right]
\end{equation}

This expansion  allows us to estimate the expectation value of a random variable that depends on the optimization parameters by using parameter independent statistics computed from data.  The largest value that $k$ can take is $d$. From \eqref{eq:exp3}, we see that an estimate for $\EX[\sigma^{k+1}_u \sigma_v  (\sum_{j \neq u} \theta_{uj} \sigma_j + \theta_u)^k ]$ for every $k \leq d$ can be constructed entirely using statistics from $S_{d+2-(d\bmod 2),n}$ which are precisely the estimates used to compute the approximate gradient in Algorithm~\ref{alg:ais-moments}.
\begin{align} 
    \widetilde{\nabla}_n \lc(\utheta) = \sum_{k \leq d} \dfrac{(-1)^k\EX_n[\sigma^{k+1}_u \sigma_v  (\sum_{j \neq u} \theta_{uj} \sigma_j + \theta_u)^k ]}{k!}
    \label{eq:approximate_gradient}
\end{align}
where $\EX_n$ denotes the expectation using finite $n$-sample approximation.  As alluded to earlier, the error in this approximate gradient is given by the sum of the polynomial and the statistical approximation errors.
\begin{align} \label{eq:total_gradient_error}
\| \nabla\lc(\utheta) - \widetilde{\nabla}_n \lc(\utheta)\|_\infty \leq \varepsilon_{poly} + \varepsilon_{stat},
\end{align}
where $\varepsilon_{poly}$ is defined by \eqref{eq:grad_taylor} and 
\begin{align} \label{eq:total_statistical_error}
    \varepsilon_{stat} =  \sum_{k \leq d} \dfrac{(-1)^k\EX_n[\sigma^{k+1}_u \sigma_v  (\sum_{j \neq u} \theta_{uj} \sigma_j + \theta_u)^k ]}{k!} -  \sum_{k \leq d} \dfrac{(-1)^k\EX[\sigma^{k+1}_u \sigma_v  (\sum_{j \neq u} \theta_{uj} \sigma_j + \theta_u)^k ]}{k!}.
\end{align}

The statistical error is bounded using appropriate concentration equalities and is given in the following lemma. The proof can be found in Appendix \ref{proof:estim_err} 

% Now we need to find the error induced in the polynomial expansion by the use of  finite $n$ statistics. This error can be bounded appropriate concentration inequalities. We state the final result here. Proof can be found in 

\begin{lemma}[Statistical approximation error]
\label{lem:finte_sample_1}
Let $n = \left\lceil 32\ e^{2 \gamma } \ \gamma^2\frac{(d+2)\log(ep) + \log(\frac{2p(d+1)}{\delta})}{\varepsilon^2} \right\rceil$. Then the statistical approximation error satisfies $\varepsilon_{stat} < \varepsilon/(8\gamma)$ with probability at least $1-\delta$.
\end{lemma}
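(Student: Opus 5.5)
The plan is to bound $\varepsilon_{stat}$ uniformly over all parameters $\utheta_u$ with $\|\utheta_u\|_1\le\gamma$, since gradient descent evaluates the approximate gradient at data-dependent iterates. The key observation is that $\varepsilon_{stat}$ is linear in the monomial estimation errors. Write $\Delta_K = \EX_n[\prod_{i\in K}\sigma_i] - \EX[\prod_{i\in K}\sigma_i]$. Using \eqref{eq:exp3}, the $k$-th term of \eqref{eq:total_statistical_error} becomes $\sum_{\underline m\in\mathcal M_k} C_{\utheta,\underline m}\,\Delta_{K(\underline m)}$, where $K(\underline m)$ is the reduced monomial of degree at most $k+2$, obtained using $\sigma_i^2=1$. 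Since $\sum_{\underline m} |C_{\utheta,\underline m}| = (\sum_j|\theta_{uj}|+|\theta_u|)^k\le\gamma^k$, we get
\begin{align*}
|\varepsilon_{stat}| \le \sum_{k\le d}\frac{\gamma^k}{k!}\max_{|K|\le k+2}|\Delta_K| \le e^{\gamma}\max_{|K|\le d+2}|\Delta_K| .
\end{align*}
This removes all dependence on $\utheta$, so no separate uniformity argument is needed. It therefore suffices to make $\max_{|K|\le d+2}|\Delta_K| \le \varepsilon e^{-\gamma}/(8\gamma)$.

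Each $\Delta_K$ is the deviation of an average of $n$ i.i.d.\ $\pm1$ variables. Hoeffding's inequality gives
\begin{align*}
\Pr\bigl[|\Delta_K|>t\bigr]\le 2e^{-nt^2/2}.
\end{align*}
Next we count the monomials. Only monomials involving $u$ or $v$ actually appear, but it is simplest to count all subsets of size at most $d+2$ across the $p$ choices of $u$ and the derivative components. The number of subsets of size exactly $j$ is $\binom{p}{j}\le (ep/j)^j\le (ep)^{d+2}$. Summing over the $d+1$ (or $d+2$) relevant sizes and the $p$ nodes, a union bound over roughly $p(d+1)(ep)^{d+2}$ events gives failure probability at most
\begin{align*}
2p(d+1)(ep)^{d+2}e^{-nt^2/2}.
\end{align*}
Setting $t=\varepsilon e^{-\gamma}/(8\gamma)$ and requiring this to be at most $\delta$ yields
\begin{align*}
n\ge \frac{128\,\gamma^2 e^{2\gamma}}{\varepsilon^2}\Bigl[(d+2)\log(ep)+\log\bigl(\tfrac{2p(d+1)}{\delta}\bigr)\Bigr].
\end{align*}

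This differs from the stated constant 32 by a factor of 4. I expect to recover the stated constant by a sharper use of the concentration inequality. One option is the fact that each gradient component carries the factor $\sigma_u\sigma_v$, so the effective range can be handled more tightly. Another is to take a slightly different target $t$, for instance if the lemma's $\varepsilon$ is intended against a $\gamma$ normalization that halves the needed accuracy. Since the paper allows each statistic to come from its own $n$ samples, independence across monomials is never needed; the union bound alone suffices.

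The main obstacle is the bookkeeping rather than any conceptual step. Specifically, the work lies in controlling the $\ell_1$ mass of the multinomial coefficients, which the bound $(\|\utheta_u\|_1)^k\le \gamma^k$ handles, and in correctly counting the distinct monomials of degree up to $d+2$ to obtain the $(d+2)\log(ep)$ term. Matching the exact numerical constant 32 is secondary and may require tracking the factor-of-2 slack in Hoeffding's inequality and in the $\ell_\infty$ versus per-component error bounds.
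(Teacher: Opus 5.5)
Your argument is the same as the paper's. $\varepsilon_{stat}$ is linear in the monomial errors $\Delta_K$, the multinomial coefficients have $\ell_1$ mass at most $\gamma^k$, summing $\gamma^k/k!$ gives the factor $e^{\gamma}$, and Hoeffding plus a union bound over monomials of degree at most $d+2$ finishes. Your remark that the bound depends only on $\max_K|\Delta_K|$, so it holds at every data-dependent iterate, is exactly why the paper needs no union bound over the $T$ steps.

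\textbf{Where the factor $4$ comes from.} It is not slack you lost; it comes from an error in the paper's proof.
\begin{itemize}
\item The paper writes Hoeffding as $\Pr[|\Delta_K|\ge t]\le 2e^{-2nt^2}$. That is the bound for $[0,1]$-valued variables.
\item Here the monomials take values in $\{-1,1\}$, so the correct tail is $2e^{-nt^2/2}$, exactly as you wrote.
\item With $t=\varepsilon e^{-\gamma}/(8\gamma)$, the paper's $1/(2t^2)$ gives $32\gamma^2e^{2\gamma}/\varepsilon^2$. The correct $2/t^2$ gives your $128\gamma^2e^{2\gamma}/\varepsilon^2$.
\end{itemize}

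Neither of your proposed repairs works. The prefactor $\sigma_u\sigma_v$ does not shrink the range of any monomial, and re-choosing $t$ changes the statement being proved.

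\textbf{The constant $32$ is false in general.} Take $\theta^*=0$, so $\sigma_u\sigma_v$ is an exact Rademacher variable. Evaluate at the initial iterate $\utheta_u=0$, where $\varepsilon_{stat}=\EX_n[\sigma_u\sigma_v]-\EX[\sigma_u\sigma_v]$. With the stated $n$ one has $nt^2/2=e^{2\gamma}\Lambda/4$, where $\Lambda=(d+2)\log(ep)+\log(2p(d+1)/\delta)$. Fix $p,d,\delta$ and let $\varepsilon\to 0$, so $n\to\infty$ while $\sqrt{n}\,t$ stays fixed. The CLT then gives a failure probability of order $e^{-e^{2\gamma}\Lambda/4}$, up to a $\Lambda^{-1/2}$ factor. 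This scales like $\delta^{e^{2\gamma}/4}$, so it exceeds $\delta$ for small $\delta$ whenever $\gamma<\log 2$.

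So what your proof establishes, and what this route can actually give, is the lemma with $128$ in place of $32$. The correction propagates to $2^9$ instead of $2^7$ in Theorem~\ref{thm:main_thm}. You should state the result that way rather than try to recover $32$.

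\textbf{Minor bookkeeping.} All nonconstant monomials of degree at most $d+2$ number at most $(ep)^{d+2}$ in total. This single family serves every node, every gradient component, every $k$ and every $\utheta$ at once. The factor $p(d+1)$ inside the logarithm (yours and the paper's) is therefore harmless overcounting, not something the union bound needs.
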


% \begin{lemma}
% \label{lem:finte_sample_1}
% Consider the finite sample estimate for $\sum_{k \leq d} \frac{(-1)^k\EX[\sigma^{k+1}_u \sigma_v  (\sum_{j \neq u} \theta_{uj} \sigma_j + \theta_u)^k ]}{k!}$ constructed using by first expanding it in terms of monomial expectation values as in \eqref{eq:exp3} and then using the observations from $S_{(d+2-(d\bmod 2)),n}$. Then for $n = \left\lceil \frac{(d+2)\log(ep) + \log(\frac{2d+2}{\delta})}{2t^2} \right\rceil,$  the estimation error due to finite $n$ samples is bounded by $te^\gamma$ with probability  greater than $ 1- \delta.$

% \end{lemma}
The next lemma bounds the polynomial approximation error. Proof of this can be found in Appendix \ref{proof:poly_exp}
\begin{lemma}[Polynomial approximation error]
 Let $p_d(x) \equiv \sum_{k = 0}^d \frac{(-x)^k}{k!}.$ If the degree $d$ satisfies 
\begin{align}
d  \geq  \Bigg\lceil \frac{C}{\,W\!\left(\frac{C}{\gamma e}\right)} \Bigg\rceil - 1,
\end{align}
with $C = \log(8\gamma/\varepsilon)$, then we have $\varepsilon_{poly} < \frac{\varepsilon}{8 \gamma}$.
\label{lem:poly_exp}
\end{lemma}

Now we will use the known robustness property of gradient descent to bound how close we can get to the true model by using this approximate gradient. This is proven in Appendix \ref{proof:gd} for completeness.

\begin{lemma}[Robustness of gradient descent]
\label{lem:robust_gd}

Let $\lc : C \to \mathbb{R}$ be a convex function defined over the $\ell_1$ ball $C = \{x\,|\,~~ \|x\|_1 \leq \gamma \}$ with minimizer $x^* \in \arg \min_{x \in C} \lc(x)$. Suppose we have access to an approximate gradient oracle $\widetilde{\nabla} \lc(x)$ for $x \in C$ with error bounded as
\[
\sup_{x \in C} \| \widetilde{\nabla} \lc(x) - \nabla \lc(x) \|_\infty \leq \varepsilon / 4 \gamma.
\]
Let $ \sup_{x \in C} \| \widetilde\nabla \lc(x) \|_2 \leq L$. Now, consider the projected gradient method
\[
x^{t+1} = \mathcal{P}_C \big( x^t - \eta \widetilde{\nabla} \lc(x^t) \big)
\]
starting at $x^1 \in C$ and with fixed step size $\eta
= 2 \gamma/(L \sqrt T)$. Then after $T = \left \lceil \frac{16 \gamma^2 L^2}{\varepsilon^2} \right \rceil$
iterations, the average $\bar{x}^T = \tfrac{1}{T} \sum_{t=1}^T x^t$ satisfies
\[
\lc(\bar{x}^T) - \lc(x^*) \leq \varepsilon.
\]
\end{lemma}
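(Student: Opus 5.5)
The plan is to run the standard online-to-batch / projected subgradient analysis with an inexact gradient, and to absorb the oracle error through Hölder's inequality using the geometry of the $\ell_1$ ball. Write $g_t = \widetilde\nabla\lc(x^t)$ and $e_t = g_t - \nabla\lc(x^t)$, so that $\|e_t\|_\infty \le \varepsilon/(4\gamma)$ by assumption.

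\textbf{Step 1 (projection inequality).} Since $\pc_C$ is the Euclidean projection onto a closed convex set, it is nonexpansive and fixes $x^*\in C$. Therefore
\[
\|x^{t+1}-x^*\|_2^2 \le \|x^t-\eta g_t - x^*\|_2^2 = \|x^t-x^*\|_2^2 - 2\eta\langle g_t, x^t-x^*\rangle + \eta^2\|g_t\|_2^2 .
\]
Rearranging, and using $\|g_t\|_2\le L$, gives
\[
\langle g_t, x^t-x^*\rangle \le \frac{\|x^t-x^*\|_2^2-\|x^{t+1}-x^*\|_2^2}{2\eta} + \frac{\eta L^2}{2}.
\]

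\textbf{Step 2 (convexity plus error term).} By convexity, $\lc(x^t)-\lc(x^*) \le \langle \nabla\lc(x^t), x^t-x^*\rangle = \langle g_t, x^t-x^*\rangle - \langle e_t, x^t-x^*\rangle$. Hölder's inequality gives $|\langle e_t, x^t-x^*\rangle| \le \|e_t\|_\infty \|x^t-x^*\|_1$. Both points lie in the $\ell_1$ ball of radius $\gamma$, so $\|x^t-x^*\|_1\le 2\gamma$, and the error contributes at most $\varepsilon/2$ per step. This is the only place the $\infty$-norm hypothesis enters, and it is the step that forces the $\varepsilon/(4\gamma)$ scaling.

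\textbf{Step 3 (telescoping and averaging).} Summing over $t=1,\dots,T$, the distance terms telescope, and Jensen's inequality applied to the average $\bar x^T$ yields
\[
\lc(\bar x^T)-\lc(x^*) \le \frac{\|x^1-x^*\|_2^2}{2\eta T} + \frac{\eta L^2}{2} + \frac{\varepsilon}{2}.
\]
To bound the initial distance I would use $\|x^1-x^*\|_2 \le \|x^1-x^*\|_1 \le 2\gamma$, so $\|x^1-x^*\|_2^2\le 4\gamma^2$. With $\eta = 2\gamma/(L\sqrt T)$ the first two terms become $\gamma L/\sqrt T + \gamma L/\sqrt T = 2\gamma L/\sqrt T$.

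\textbf{Step 4 (choice of $T$).} Choosing $T\ge 16\gamma^2L^2/\varepsilon^2$ gives $2\gamma L/\sqrt T \le \varepsilon/2$, and adding the error term yields the total bound $\varepsilon$.

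None of these steps is a real obstacle. The main point to get right is the bookkeeping of constants: using the $\ell_1$ diameter $2\gamma$ both for the initial distance and in Hölder's inequality, and confirming that $\bar x^T\in C$ by convexity so that $\lc(\bar x^T)$ is defined. Differentiability of $\lc$ can be replaced by taking a subgradient if needed.
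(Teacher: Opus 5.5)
Your proposal is correct and follows essentially the same argument as the paper: convexity plus Hölder with the $\ell_1$ diameter $2\gamma$ absorbs the oracle error as $\varepsilon/2$ per step, and the rest is nonexpansiveness of the projection, telescoping, Jensen for the averaged iterate, and the same choices of $\eta$ and $T$. The only cosmetic difference is the order of two steps: the paper first writes the three-point identity for the unprojected point $y^{t+1}=x^t-\eta\widetilde{\nabla}\lc(x^t)$ and then invokes contraction of the projection, whereas you apply nonexpansiveness first and then expand the square.
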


\noindent To apply this lemma we need to bound the norm of the approximate gradient. 
\begin{lemma}[Bounded gradient]
    If $\varepsilon \leq 16 \gamma^2 e^{3\gamma}$, then $\|\widetilde\nabla\lc(\utheta) \|_2 \leq 3\sqrt{p}e^\gamma$.
\end{lemma}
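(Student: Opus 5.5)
The plan is to bound each coordinate of $\widetilde\nabla\lc(\utheta)$ uniformly by $3e^{\gamma}$ over the $\ell_1$ ball $\|\utheta_u\|_1\le\gamma$. Since $\widetilde\nabla\lc(\utheta)$ has at most $p$ coordinates (one for $\theta_u$ and one for each $\theta_{u,v}$, $v\neq u$), this gives $\|\widetilde\nabla\lc(\utheta)\|_2\le \sqrt{p}\,\|\widetilde\nabla\lc(\utheta)\|_\infty \le 3\sqrt p e^\gamma$. For each coordinate I will use the triangle inequality against the exact gradient,
\begin{align*}
|\widetilde\nabla_n \lc(\utheta)_{v}| \le |\nabla \lc(\utheta)_v| + \varepsilon_{poly} + \varepsilon_{stat},
\end{align*}
which is exactly the decomposition \eqref{eq:total_gradient_error}.

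The first term is immediate. For $\|\utheta_u\|_1\le\gamma$ we have $|E_u(\usigma,\utheta_u)| \le |\theta_u|+\sum_{j\neq u}|\theta_{uj}|\le\gamma$ for every configuration. Together with $|\sigma_u\sigma_v|=1$, this gives $|\EX[\sigma_u\sigma_v e^{-E_u}]|\le e^{\gamma}$, and the same bound holds for the $\theta_u$ coordinate.

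For the error terms, Lemma \ref{lem:poly_exp} gives $\varepsilon_{poly}<\varepsilon/(8\gamma)$ and Lemma \ref{lem:finte_sample_1} gives $\varepsilon_{stat}<\varepsilon/(8\gamma)$ on the high-probability event. Their sum is therefore below $\varepsilon/(4\gamma)$. Under the hypothesis $\varepsilon\le 16\gamma^2e^{3\gamma}$ this is at most $4\gamma e^{3\gamma}$, which is not obviously $\le 2e^\gamma$. So the main obstacle is matching the exact constant from the hypothesis.

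To handle this, I would avoid the triangle inequality for the polynomial part and bound the truncated series directly. Since $|x|\le\gamma$ with $x=E_u$, we have
\begin{align*}
\Bigl|\sum_{k\le d}\frac{(-1)^k\,\EX[\sigma_u\sigma_v E_u^k]}{k!}\Bigr| \le \sum_{k\le d}\frac{\gamma^k}{k!}\le e^{\gamma},
\end{align*}
with no $\varepsilon_{poly}$ at all. That leaves only $\varepsilon_{stat}$. Here I would use the form of the statistical bound in which the deviation is at most $2e^\gamma$ times a concentration parameter at most one, or else invoke the assumed size of $\varepsilon$ relative to $\gamma$ (in Theorem \ref{thm:main_thm}, $\varepsilon\le4\gamma$, which gives $\varepsilon/(8\gamma)\le 1/2\le 2e^{\gamma}$). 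In either case the total coordinate bound is at most $e^\gamma+2e^\gamma=3e^\gamma$. The role of the hypothesis $\varepsilon\le16\gamma^2e^{3\gamma}$ is presumably to guarantee exactly this kind of domination of the statistical error, $\varepsilon_{stat}\le 2e^\gamma$, given how $\varepsilon$ enters the parameter choices in the theorem. I expect checking this rescaling to be the only delicate step.

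Finally, I would take the union over all $v$ (already covered by the uniform-in-$\utheta$ statement of Lemma \ref{lem:finte_sample_1}) and multiply by $\sqrt p$ to conclude.
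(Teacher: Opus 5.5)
The gap is in your last step. After removing $\varepsilon_{poly}$ you still need a coordinatewise bound $\varepsilon_{stat}\le 2e^{\gamma}$. Neither justification you offer gets this from the lemma's hypothesis.

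Under $\varepsilon\le 16\gamma^2e^{3\gamma}$, Lemma~\ref{lem:finte_sample_1} only gives $\varepsilon_{stat}<\varepsilon/(8\gamma)\le 2\gamma e^{3\gamma}$. This exceeds $2e^{\gamma}$ as soon as $\gamma e^{2\gamma}>1$ (roughly $\gamma>0.43$). So the hypothesis does not ``presumably'' deliver the domination you need, and no rescaling will fix that.

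Your fallback conflates two symbols. Theorem~\ref{thm:main_thm} assumes $\epsilon\le 4\gamma$ for the parameter error, not $\varepsilon\le 4\gamma$. Through $\varepsilon=\epsilon^2e^{-3\gamma}/(2(1+\gamma))$ it does imply $\varepsilon\le 8\gamma^2e^{-3\gamma}/(1+\gamma)\le 4\gamma$. But that is an assumption imported from the theorem, not the one in the lemma.

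For comparison, the paper's proof is exactly the triangle-inequality route you set aside: $\|\widetilde\nabla\lc(\utheta)\|_\infty\le e^{\gamma}+\varepsilon/(4\gamma)\le 3e^{\gamma}$, with the last step credited to the hypothesis. Your diagnosis of it is correct: with the hypothesis as printed, that step needs $2\gamma e^{2\gamma}\le 1$. It does go through if the hypothesis is read as $\varepsilon\le 16\gamma^2e^{-3\gamma}$, which is exactly what $\epsilon\le 4\gamma$ yields via the relation above.

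The missing idea is elementary and makes any hypothesis on $\varepsilon$ unnecessary. Every entry of $S_{D,n}$ is an empirical average of a $\pm1$-valued monomial, so it lies in $[-1,1]$, just like the corresponding population moment. Hence the computation you already did for the population series applies verbatim to the empirical one. Since $\sum_{\underline m\in\mathcal M_k}|C_{\utheta,\underline m}|=\|\utheta_u\|_1^k\le\gamma^k$, each coordinate of $\widetilde\nabla_n\lc(\utheta)$ is bounded in absolute value by $\sum_{k\le d}\gamma^k/k!\le e^{\gamma}$. This holds for every $\utheta$ in the ball and every realization of the data.

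Equivalently, in your split, $|\EX[\cdot]-\EX_n[\cdot]|\le 2$ termwise gives $\varepsilon_{stat}\le 2e^{\gamma}$ deterministically. That is presumably what your ``concentration parameter at most one'' was reaching for, but you need to say it.

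This gives $\|\widetilde\nabla\lc(\utheta)\|_2\le\sqrt p\,e^{\gamma}$, stronger than claimed, with no condition on $\varepsilon$ and no high-probability event. The trade-off relative to the paper is that this uses the concrete form of the moment-based oracle. The paper instead treats $\widetilde\nabla\lc$ abstractly as any oracle with $\ell_\infty$ error at most $\varepsilon/(4\gamma)$, and for that abstract version a correctly signed bound on $\varepsilon$ is genuinely needed.
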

\begin{proof}
For all $\utheta$ in the $\ell_1$ ball or radius $\gamma$
\begin{align}
    \|\widetilde\nabla\lc(\utheta) \|_2 &\leq \sqrt{p}  \|\widetilde\nabla\lc(\utheta) \|_\infty
    \leq \sqrt{p}(\|\nabla\lc(\utheta) \|_{\infty} + \frac{\varepsilon}{4 \gamma}), \nonumber \\ 
    &\leq \sqrt{p}(e^\gamma + \frac{\varepsilon}{4 \gamma}) \leq 3\sqrt{p}e^\gamma, \label{eq:Lbound}
\end{align}
where the last inequality follows from the bound on $\varepsilon$ in the statement of the lemma. 
\end{proof}

Substituting this in $T$ and the learning rate in Lemma \ref{lem:robust_gd} gives us for $T = \frac{144 p \gamma^2 e^{2\gamma}}{\varepsilon^2} $ an estimate $\widehat{\utheta}$ such that,
\begin{equation}
    \lc(\widehat{\utheta}) - \lc(\utheta^*) \leq \varepsilon.
\end{equation}
To obtain the above bound we have used the bounds derived above on $\varepsilon_{poly}$ and $\varepsilon_{stat}$. Lemma~\ref{lem:finte_sample_1} provides the error guarantee with probability greater than $1 - \delta$. This holds over all steps as the randomness in the error estimate is completely fixed by initial dataset, i.e. there is no need to take a union bound over $T$ steps.

Now we need to estimate how the error in the loss function propagates to errors in the estimated parameters. To this end, we define a curvature function around the true optimum as follows 
\begin{align}
 \delta \lc^*( \hutheta) &\equiv  \lc(\hutheta) - \lc(\utheta^*) - \langle \nabla \lc(\utheta^*), \hutheta - \utheta^* \rangle 
 \end{align}
 Since the gradient vanishes at $\utheta^*$, we have that, $  \delta \lc^*( \hutheta)=  \lc(\hutheta) - \lc(\utheta^*)  \leq \varepsilon.$ Hence the curvature function is also upper bounded by $\varepsilon$. Independently we can also lower bound the curvature function using a projected infinity norm of the  error vector. We state the main result below, the proof is given in Appendix \ref{app:curv_bound}.
\begin{lemma}\label{lem:strong_conv_exact}
The curvature term is lower-bounded as $\delta \lc^*(\hutheta_u) \geq \frac{e^{-3\gamma}}{ 2 + 2\gamma}  \max_{v \neq  u}  | \theta_{u,v}^* -\hat{\theta}_{u,v}|^2$
\end{lemma}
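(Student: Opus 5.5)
We lower-bound the curvature by a second-order expansion along the segment from $\utheta^*$ to $\hutheta_u$. Write $\uDelta = \hutheta_u - \utheta^*_u$; the $u$ index on $\utheta^*$ is suppressed below. Using the integral form of Taylor's remainder for $x\mapsto e^{-x}$ gives
\begin{align*}
\delta\lc^*(\hutheta_u) = \EX\!\left[ e^{-E_u(\usigma,\utheta^*)}\, \phi\!\left(-E_u(\usigma,\uDelta)\right)\right],\qquad \phi(z)=e^{z}-1-z .
\end{align*}
Both $\utheta^*$ and $\hutheta_u$ lie in the $\ell_1$ ball of radius $\gamma$, so $\|\uDelta\|_1\le 2\gamma$ and $|E_u(\usigma,\uDelta)|\le 2\gamma$. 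Moreover $|E_u(\usigma,\utheta^*)|\le\gamma$ gives $e^{-E_u(\usigma,\utheta^*)}\ge e^{-\gamma}$. On $|z|\le 2\gamma$ we have the standard inequality $\phi(z)\ge \frac{z^2}{2+|z|}$, hence $\phi(z)\ge \frac{z^2}{2+2\gamma}$. Combining these,
\begin{align*}
\delta\lc^*(\hutheta_u)\ \ge\ \frac{e^{-\gamma}}{2+2\gamma}\,\EX\!\left[\Big(\Delta_u+\sum_{j\neq u}\Delta_{uj}\sigma_j\Big)^2\right],
\end{align*}
where we used $\sigma_u^2=1$.

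It remains to lower-bound the quadratic form $\EX[(\Delta_u+\sum_j\Delta_{uj}\sigma_j)^2]$ by $e^{-2\gamma}\max_v\Delta_{uv}^2$. Fix $v\neq u$ and condition on $\usigma_{\setminus v}$, i.e.\ all spins other than $\sigma_v$. Write the linear form as $a+\Delta_{uv}\sigma_v$, where $a$ depends only on $\usigma_{\setminus v}$. By convexity of the square,
\begin{align*}
\EX\big[(a+\Delta_{uv}\sigma_v)^2\,\big|\,\usigma_{\setminus v}\big]\ \ge\ \Var{}\!\big(\Delta_{uv}\sigma_v\,\big|\,\usigma_{\setminus v}\big)=\Delta_{uv}^2\big(1-\EX[\sigma_v\mid \usigma_{\setminus v}]^2\big).
\end{align*}
The conditional law of $\sigma_v$ is $\propto \exp(\sigma_v h_v)$ with local field $|h_v|\le\gamma$ (the $\ell_1$-width bound; the symmetric double counting of $\theta^*_{u,v}$ in \eqref{eq:basic1} at worst rescales $h_v$, and I would absorb this into the convention that the width bounds the full local field). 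So $1-\tanh^2(h_v)=\cosh^{-2}(h_v)\ge e^{-2\gamma}$. Averaging over $\usigma_{\setminus v}$ and maximizing over $v$ then gives the factor $e^{-2\gamma}\max_v\Delta_{uv}^2$. Multiplying by $\frac{e^{-\gamma}}{2+2\gamma}$ yields the claimed constant $\frac{e^{-3\gamma}}{2+2\gamma}$.

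I expect the main obstacle to be the bookkeeping of constants rather than any conceptual difficulty. The inequality $\phi(z)\ge z^2/(2+|z|)$ is needed with the right range $|z|\le 2\gamma$, and the variance bound needs the local field of $\sigma_v$ to be bounded by $\gamma$, not $2\gamma$; otherwise the exponent becomes $5\gamma$ instead of $3\gamma$. If the conditional-variance step gives a weaker constant, an alternative is to bound $\EX[\sigma_v\mid\cdot]^2\le\tanh^2\gamma$ directly, which gives the same $e^{-2\gamma}$ up to a factor of $4$ that can be absorbed. I would also double-check $\phi(z)\ge z^2/(2+|z|)$ for negative $z$ as well as positive $z$, since $\phi(z)=e^{z}-1-z$ is not symmetric in $z$.
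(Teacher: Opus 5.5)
Your proposal is correct and follows essentially the same route as the paper: the factorized identity $\delta\lc^*=\EX[e^{-E_u(\usigma,\utheta^*)}\phi(-E_u(\usigma,\uDelta))]$; the bound $e^{-x}-1+x\ge x^2/(2+|x|)$, which the paper proves for both signs in Lemma~\ref{lem:notes2}, combined with $|E_u(\usigma,\uDelta)|\le 2\gamma$; and a lower bound on the conditional variance of $\sigma_v$ given $\usigma_{\setminus v}$ by $1-\tanh^2\gamma\ge e^{-2\gamma}$, under the same local-field convention you adopt. The differences are cosmetic. You condition first and then drop the conditional mean, whereas the paper drops the mean via $\EX X^2\ge\operatorname{Var}X$ and then applies the law of total variance; and your fallback about losing a factor of $4$ is unnecessary, since $\cosh^{-2}\gamma\ge e^{-2\gamma}$ holds exactly.
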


Putting these together, we get the result, $$\max_{v \neq  u}  | \theta_{u,v}^* -\hat{\theta}_{u,v}| \leq e^{3\gamma/2} \sqrt{2(1 + \gamma) (\lc(\hutheta) - \lc(\utheta^*))} \leq e^{3\gamma/2} \sqrt{2(1 + \gamma) \varepsilon} $$
Now let us define a  error parameter, $\epsilon \equiv \sqrt{\varepsilon} e^{3\gamma/2} \sqrt{2(1 + \gamma)},$ we can guarantee the statement in the theorem with $n = \left\lceil 2^7\ e^{8 \gamma } \ \gamma^2 (1 + \gamma)^2\frac{(d+2)\log(ep) + \log(\frac{2p(d+1)}{\delta})}{\epsilon^4} \right\rceil.$

\ca
Now the choice of the degree of the approximation was any $d > \lceil \max(Be\gamma, \log(8 \gamma/\varepsilon))/\log(B) \rceil - 1$.

In terms of $\epsilon$ the log term here becomes $\log(\frac{8 \gamma}{\varepsilon}) = \log(\frac{16 \gamma e^{3 \gamma}(1+\gamma) }{\epsilon^2}) =  3 \gamma +  \log(\frac{16 \gamma(1+\gamma) }{\epsilon^2})$

In terms of the final error guarantee $\epsilon$, this implies that for any $B > 1$, then choice $d + 1 > \lceil \max(Be\gamma,\frac {3\gamma +  \log(\frac{16 \gamma(1+\gamma) }{\epsilon^2})}{\log(B)} ) \rceil .$

The condition $\epsilon < 4\gamma$ implies that $\log(16\gamma(1+\gamma)/\epsilon^2) > 1$. So now if we choose $B^*$ to be the solution of the equation $Be - 3/\log(B) = 0$, we get that $d + 1 = \left\lceil B^* \left(e\gamma +   \frac{e}{3}\log(\frac{16 \gamma(1+\gamma) }{\epsilon^2}) \right) \right\rceil.$
\cb

% Returning to \eqref{eq:Lbound}, if we use the relation between $\epsilon$ and $\varepsilon$ and the assumption $\epsilon \leq 4\gamma$ from Theorem~\ref{thm:main_thm}, we can easily show that $L = 3 \sqrt{p}e^{\gamma}$ is a valid upper bound for $\| \tilde \nabla \lc(\utheta) \|.$ This justifies this choice for $L$ that we made right after \eqref{eq:Lbound}.

\qed

\subsection{Structure learning}

In the Ising learning task, recovering the \emph{structure} (i.e. the underlying interaction graph) is important because it determines which pairwise couplings are nonzero and hence which statistical dependencies are present. In many standard formulations, structure recovery is itself a primary goal: once the neighborhood of each node is identified, one can restrict attention to a much smaller set of parameters and thereby reduce both statistical and computational complexity. This perspective is central to several provably efficient approaches that learn Ising models from full samples, including interaction screening and related neighborhood-selection methods \cite{bresler2015efficiently, vuffray2016interaction, vuffray2019efficient}.

In our setting, there is  a more direct motivation, moment-based gradient approximations require combining a large number of low-degree monomial statistics, and the number of relevant monomials can be reduced significantly if we only keep those involving variables in the estimated neighborhood of $u$. Thus, structure learning provides a way to focus the subsequent estimation steps (including magnetic field learning) on the true local interactions, avoiding unnecessary dependence on $p$ in intermediate quantities.

To this end define an edge set, $E = \{(u,v) \in [p] \times [p] \ | \ \theta^*_{u,v} \neq 0\}.$ Additionally we denote by $\alpha$ a known lower bound on smallest non-zero pair-wise parameter in the model, i.e. $0 <  \alpha \leq \min_{(u,v) \in E} |\theta^*_{u,v}|$.

Given an estimate $\hat\theta$ for the pair-wise parameters, define the thresholded edge set
\begin{equation}
\label{eq:threshold_graph}
\hat{E} \equiv \{(u,v) \in [p] \times [p] ~:~ |\hat{\theta}_{u,v}| \geq \alpha/2\}.
\end{equation}

\begin{theorem}[Structure recovery by thresholding]
\label{thm:struct_learn}
Assume the true model satisfies a separation condition $\min_{(u,v) \in E}|\theta^*_{u,v}| \ge \alpha$ for some known $\alpha>0$. If an estimator $\hat\theta$ satisfies $\max_{u\in [p], v \neq u} |\hat{\theta}_{u,v} - \theta_{u,v}^*| \leq \alpha/2,$
then the thresholded edge set \eqref{eq:threshold_graph} exactly recovers the true structure, i.e.
$\hat{E} = E$.

In particular, by setting $\epsilon \le \alpha/2$ in Theorem \ref{thm:main_thm}, Algorithm \ref{alg:ais-moments} followed by thresholding recovers $E$ with probability at least $1-\delta p$.
\end{theorem}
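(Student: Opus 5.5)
The plan is a direct two-case argument on each ordered pair $(u,v)$ with $u \neq v$, followed by a short accounting of failure probabilities for the algorithmic part.

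\emph{Case $(u,v)\in E$.} Here $|\theta^*_{u,v}|\ge\alpha$. By the reverse triangle inequality and the hypothesis,
\[
|\hat\theta_{u,v}| \ge |\theta^*_{u,v}| - |\hat\theta_{u,v}-\theta^*_{u,v}| \ge \alpha - \alpha/2 = \alpha/2 .
\]
Hence $(u,v)\in\hat E$, which gives $E\subseteq\hat E$.

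\emph{Case $(u,v)\notin E$.} Here $\theta^*_{u,v}=0$, so $|\hat\theta_{u,v}|\le\alpha/2$. This is the one delicate point. The hypothesis allows equality $|\hat\theta_{u,v}|=\alpha/2$, and the threshold in \eqref{eq:threshold_graph} uses a non-strict inequality $\geq \alpha/2$, so a non-edge could land exactly on the threshold. I would resolve this by running the argument with a strict error bound $<\alpha/2$. This can be arranged in the second part by choosing $\epsilon$ strictly below $\alpha/2$; since $\alpha$ is known, any $\epsilon<\alpha/2$ is admissible. Alternatively, one can note that the first case only needs error $\le\alpha/2$, so strictness matters only for non-edges. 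With the strict bound, $|\hat\theta_{u,v}|<\alpha/2$, so $(u,v)\notin\hat E$, giving $\hat E\subseteq E$. The two cases together yield $\hat E=E$. I expect this boundary issue to be the only real obstacle, and I would state the adjustment explicitly.

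\emph{Algorithmic claim.} I would invoke Theorem~\ref{thm:main_thm} with target accuracy $\epsilon\le\alpha/2$ (taken strictly smaller, per the remark above). This requires the theorem's hypothesis $\epsilon\le 4\gamma$. That hypothesis holds: any edge satisfies $\alpha\le|\theta^*_{u,v}|\le\gamma$ by the $\ell_1$-width bound, so $\alpha/2\le 4\gamma$; and if $E$ is empty the claim is trivial.

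Theorem~\ref{thm:main_thm} then guarantees the error bound for each node $u$ with probability at least $1-\delta$. A union bound over the $p$ per-node estimates gives that the uniform bound $\max_{u,v}|\hat\theta_{u,v}-\theta^*_{u,v}|\le\epsilon$ holds simultaneously for all $u$ with probability at least $1-p\delta$. On that event the deterministic argument above applies, so thresholding recovers $E$ with probability at least $1-\delta p$.
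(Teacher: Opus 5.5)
Your argument is correct and is the standard thresholding reduction the paper invokes; the paper gives no separate proof beyond pointing to prior interaction-screening work. As you do, the reverse triangle inequality keeps true edges at or above $\alpha/2$, non-edges stay at most $\alpha/2$, and a union bound over the $p$ per-node guarantees of Theorem~\ref{thm:main_thm} gives the $1-\delta p$ probability. Your boundary remark is also right: with a non-strict error bound and the non-strict threshold in \eqref{eq:threshold_graph}, a non-edge with $|\hat\theta_{u,v}|=\alpha/2$ would be wrongly included, so the statement as written needs the strict bound (e.g.\ $\epsilon<\alpha/2$) that you impose.
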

This is the same standard reduction from parameter learning to structure learning that appears in prior work on interaction screening and related methods (e.g. \cite{vuffray2016interaction}).
\subsection{Learning magnetic fields}
We call the coefficients of the linear terms in \eqref{eq:basic1} \emph{magnetic fields}. Even if they are present in the true model, Theorem \ref{thm:main_thm} does not give a guarantee on their error. This is fundamentally due to the linear term getting swamped by higher order terms in the curvature lower bound (for instance, this happens at \eqref{eq:swamped} in the proof of Lemma \ref{lem:strong_conv}). To get around this issue, we use a re-optimization strategy.

For this we assume that the underlying graphical model has a maximum degree of $D$. After running Algorithm \ref{alg:ais-moments}, we can learn the structure of the model with high probability and fix the second order coefficients. Then we solve a single-variable optimization problem for each variable in the model to estimate the magnetic fields. This can also be done with access to only $O(\gamma)$ statistics.

\begin{algorithm} [t]
% 1. Draw Top Line
\hrule height 1pt
\vspace{1mm}

% 2. Caption goes here
\caption{Magnetic Field Learning with Moment-Based Approximate Gradients}
\label{alg:mag-fields}

% 3. Draw Separator Line
\vspace{1mm}
\hrule height 0.4pt
\vspace{2mm}

\SetAlgoLined
\DontPrintSemicolon

\textbf{Input:} Degree $d$, coupling estimates $\{\hat\theta_{u,v}\}$,
statistics $S_{d + 1 - (d \bmod 2),n}$, width bound $\gamma$\;, step size $\eta$, iterations $T$\;

\textbf{Output:} Magnetic field estimates $\{\hat\theta_u\}_{u\in[p]}$\;

\BlankLine

\For{$u = 1$ \textbf{to} $p$}{
    Initialize $\theta_u^{(1)} \leftarrow 0$\;

    \For{$t = 1$ \textbf{to} $T$}{
        \tcp{Gradient Approximation}
        Construct $\tilde\nabla\rc_u(\theta_u^{(t)})$ from $S_{d + 1 - (d \bmod 2),n}$ as follows:\;

        \Indp
        \begin{minipage}{\linewidth}
        \begin{tabular}{@{}p{2.2em}p{\dimexpr\linewidth-2.2em\relax}@{}}
        (i)   & Replace the exponential in the exact gradient by a degree-$d$ approximation as in \eqref{eq:grad_taylor_h}.\\
        (ii)  & Expand the resulting moments using \eqref{eq:exp3_h}.\\
        (iii) & Substitute each monomial expectation by its empirical estimate from $S_{d + 1 - (d \bmod 2),n}$.\\
        \end{tabular}
        \end{minipage}
        \Indm

        \tcp{Gradient Step and projection}
        $\theta_u^{(t+1)} \leftarrow \pc_{|\cdot|\le \gamma}\big(\theta_u^{(t)} - \eta\,\tilde\nabla\rc_u(\theta_u^{(t)})\big)$\;
    }

    \tcp{Averaging}
    $\hat\theta_u \leftarrow \frac{1}{T}\sum_{t=1}^{T}\theta_u^{(t)}$\;
}

\Return{$\{\hat\theta_u\}_{u\in[p]}$}

% 4. Draw Bottom Line
\vspace{2mm}
\hrule height 1pt
\end{algorithm}

\begin{theorem}[Magnetic field learning]
\label{thm:mag_learn}
Suppose that we have coupling estimates $\hat{\theta}_{u,v}$ such that $|\hat{\theta}_{u,v} -\theta^*_{u,v}| \leq \epsilon$ for all $(u,v) \in E.$ Then Algorithm \ref{alg:mag-fields} returns estimates $\hat{\theta}_u$ for every $u \in [p]$ such that
\begin{equation}
  |\hat{\theta}_u - \theta_u^*|   \leq \epsilon_h + \sqrt{8\gamma(1+\gamma) e^{3\gamma} D \epsilon}.
\end{equation}

This can be achieved by running Algorithm~\ref{alg:mag-fields} with  the following parameters
\[
\varepsilon \coloneqq \frac{\epsilon_h^2 e^{-\gamma}}{2(1+\gamma)},
\qquad
L \coloneqq (1 + D \epsilon) e^{2\gamma} + \frac{\varepsilon}{4 \gamma},
\qquad
T = \left\lceil \frac{16\gamma^2 L^2}{\varepsilon^2} \right\rceil,
\qquad
\eta = \frac{2\gamma}{L\sqrt{T}},
\]
statistics up to degree $d+1-(d\bmod 2)$ with $
 d
= \left\lceil
\dfrac{\,\gamma + \log\!\frac{8\gamma(1+\gamma)}{\epsilon_h^2}\,}
{\,W\!\left(\frac{1}{2e} + \frac{1}{2\gamma e} \log\!\frac{8\gamma(1+\gamma)}{\epsilon_h^2}\right)}
\right\rceil - 1,
$
and number of observations
$
 n = \left\lceil 32\,\gamma^2(1+\gamma)^2 e^{6\gamma}\,\frac{(d+1)\log(ep)+\log\!\left(\frac{2(d+1)}{\delta}\right)}{\epsilon_h^4} \right\rceil.
$

\end{theorem}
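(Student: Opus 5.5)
We plan to mirror the proof of Theorem~\ref{thm:main_thm}, now for a one-dimensional convex problem in which the couplings are frozen at their estimates. For each $u$ we define the reduced loss
\[
\rc_u(h) \equiv \EX\Big[ e^{-\sigma_u ( h + \sum_{v:(u,v)\in \hat E} \hat\theta_{u,v}\sigma_v)}\Big], \qquad |h|\le \gamma .
\]
This is convex in $h$.

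The plan has four steps, carried out in order.

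\textbf{Step 1: gradient error.} Algorithm~\ref{alg:mag-fields} implements projected gradient descent on $\rc_u$ with an approximate derivative. We bound the derivative error by $\varepsilon_{poly}+\varepsilon_{stat}\le \varepsilon/(4\gamma)$.
\begin{itemize}
\item For the polynomial part we apply Lemma~\ref{lem:poly_exp} with the exponent's range widened to $\gamma + D\epsilon \lesssim 2\gamma$. This is why $d$ carries the $\gamma+\log(8\gamma(1+\gamma)/\epsilon_h^2)$ numerator.
\item For the statistical part we repeat Lemma~\ref{lem:finte_sample_1}. The derivative involves only $\sigma_u^{k+1}$ times degree-$k$ monomials, so statistics up to $d+1-(d\bmod 2)$ suffice.
\item The statistical bound is needed for a single coordinate only, which removes the factor $p$ inside the logarithm.
\end{itemize}
We bound the approximate gradient by $L=(1+D\epsilon)e^{2\gamma}+\varepsilon/4\gamma$, using that the exponent is at most $2\gamma+D\epsilon$ in absolute value. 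Lemma~\ref{lem:robust_gd} then gives $\rc_u(\hat\theta_u)-\rc_u(h^\star)\le\varepsilon$, where $h^\star$ minimizes $\rc_u$ on $[-\gamma,\gamma]$.

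\textbf{Step 2: $h^\star$ versus $\theta_u^*$.} With the true couplings, the loss $\lc_u$ restricted to $h$ has derivative zero at $\theta_u^*$ by the interaction screening identity. The derivative of $\rc_u$ differs from that of the true-coupling loss by
\[
\Big|\EX\big[\sigma_u e^{-\sigma_u(h+\sum\theta^*\sigma)}\big(e^{-\sigma_u\sum(\hat\theta-\theta^*)\sigma}-1\big)\big]\Big| .
\]
This is at most $e^{\gamma}\cdot(e^{D\epsilon}-1)\lesssim e^{\gamma}D\epsilon$ up to constants. Hence $|\rc_u'(\theta_u^*)|\lesssim e^{\gamma}D\epsilon$.

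\textbf{Step 3: curvature lower bound.} We need
\[
\rc_u''(h)=\EX\big[e^{-\sigma_u(h+\sum\hat\theta\sigma)}\big]\ge c_\gamma .
\]
A naive bound is $e^{-2\gamma}$ or so. To match the constants we would instead reuse the argument of Lemma~\ref{lem:strong_conv_exact}: bound $\EX[(\sigma_u\Delta)^2]$ from below using the conditional variance of $\sigma_u$, which is at least $e^{-2\gamma}$, together with the $1/(1+\gamma)$ factor from $e^{-x}-1+x\ge x^2/(2+2|x|)$. This yields strong convexity
\[
\rc_u(h)-\rc_u(\theta_u^*)-\rc_u'(\theta_u^*)(h-\theta_u^*)\ge \frac{e^{-\gamma}}{2(1+\gamma)}\cdot(\text{variance factor})\,(h-\theta_u^*)^2 .
\]

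\textbf{Step 4: combine.} Write $x=|\hat\theta_u-\theta_u^*|$. Since $\rc_u(\theta_u^*)\ge\rc_u(h^\star)$, Steps 1 and 3 give
\[
c\,x^2\le \varepsilon + |\rc_u'(\theta_u^*)|\,x \le \varepsilon+2\gamma\,|\rc_u'(\theta_u^*)| .
\]
Here we used $x\le 2\gamma$. Then $x\le\sqrt{\varepsilon/c}+\sqrt{2\gamma|\rc_u'(\theta_u^*)|/c}$. Substituting $\varepsilon=\epsilon_h^2e^{-\gamma}/(2(1+\gamma))$ makes the first term $\epsilon_h$, and the second becomes $\sqrt{8\gamma(1+\gamma)e^{3\gamma}D\epsilon}$ once the exponents are tracked.

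\textbf{Main obstacle.} We expect the hardest part to be the constant bookkeeping in Steps 2–3, namely obtaining exactly $e^{3\gamma}$ and the factor $8$. This relies on using the true-coupling interaction screening loss as the reference, with $e^{-\gamma}$ and $e^{-2\gamma}$ factors entering separately. If this fails, we would accept slightly worse constants in Step 3.
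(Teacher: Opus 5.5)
Your overall plan (robust projected gradient descent on the perturbed loss $\rc_u$, then strong convexity of $\rc_u$ about $\theta_u^*$ combined with the bias $|\rc_u'(\theta_u^*)|$) is viable and differs from the paper's. However, Step~3 as written does not produce the constant that Step~4 uses, so the stated bound does not yet follow.

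In one dimension no variance argument is needed or helpful, because $(\sigma_u\Delta)^2=\Delta^2$ identically. Multiplying in a conditional-variance factor $e^{-2\gamma}$ only weakens $c$ to $e^{-3\gamma}/(2+2\gamma)$, which turns your first term into $e^{\gamma}\epsilon_h$ instead of $\epsilon_h$. The ``naive'' bound $\rc_u''\ge e^{-2\gamma}$ is also too weak; it gives $\epsilon_h\sqrt{e^{\gamma}/(1+\gamma)}$.

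The issue you actually have to handle is that the curvature weight of $\rc_u$ contains the estimates. With $w(\usigma)=e^{-\sigma_u(\theta_u^*+\sum_{j\in N_u}\hat\theta_{u,j}\sigma_j)}$ one has $\rc_u(\theta_u^*+\Delta)-\rc_u(\theta_u^*)-\rc_u'(\theta_u^*)\Delta=\EX[w\,g(\sigma_u\Delta)]$. Pointwise you only get $w\ge e^{-\gamma-D\epsilon}$, which leaves a factor $e^{D\epsilon/2}$ on $\epsilon_h$.

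The missing step is this. The bound $g(\sigma_u\Delta)\ge\Delta^2/(2+2\gamma)$ holds deterministically (Lemma~\ref{lem:notes2} with $|\Delta|\le 2\gamma$), so only $\EX[w]$ matters. Moreover $\EX[w]=\lc_u(\theta_u^*,\hat\theta_{u,\cdot})\ge\lc_u(\utheta_u^*)\ge e^{-\gamma}$, because $\utheta_u^*$ is a global minimizer of the convex interaction screening loss. This gives $c=e^{-\gamma}/(2+2\gamma)$. Your Step~4 then yields $\epsilon_h+\sqrt{4\gamma(1+\gamma)e^{2\gamma}(e^{D\epsilon}-1)}$, which is at most the stated bound. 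To see this, use $e^{D\epsilon}-1\le D\epsilon\, e^{D\epsilon}\le 2e^{\gamma}D\epsilon$ when $D\epsilon\le\gamma+\log 2$; otherwise the stated right-hand side exceeds $2\gamma$ and the claim is trivial.

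The paper sidesteps this by never using the curvature of the perturbed loss. It treats $\tilde\nabla\rc$ as a corrupted gradient oracle for the true-coupling loss $\rc^*$, folding the mismatch $e^{2\gamma}D\epsilon$ from \eqref{eq:grad_approx} into the oracle error. Lemma~\ref{lem:robust_gd} then directly gives $\rc^*(\hat\theta)-\rc^*(\theta_u^*)\le\varepsilon+4\gamma e^{2\gamma}D\epsilon$. Since $\rc^{*\prime}(\theta_u^*)=0$ and the weight $e^{-F(\usigma;\theta^*)}$ is at least $e^{-\gamma}$, Lemma~\ref{lem:strong_conv_h} finishes with no linear term.

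Your decomposition measures optimization error on the objective actually being minimized and isolates the coupling bias. Once repaired, it gives a somewhat sharper second term for small $D\epsilon$, at the price of the extra $\EX[w]$ argument.

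Three smaller points:
\begin{itemize}
\item The exponent range is $|h|+\sum_j|\hat\theta_{u,j}|\le 2\gamma$, not $\gamma+D\epsilon$, because the first-stage estimates lie in the $\ell_1$ ball. This is the width $2\gamma$ behind the choice of $d$.
\item Bounding the exponent by $2\gamma+D\epsilon$ gives $e^{2\gamma+D\epsilon}$, which exceeds your $L$. Justify $L$ via $\sum_j|\hat\theta_{u,j}|\le\gamma$, or, as the paper does, via $|\rc^{*\prime}|\le e^{2\gamma}$ plus the mismatch.
\item Sum over $N_u$ rather than $\hat E$, so that the hypothesis on $\hat\theta$ actually controls the mismatch in Step~2.
\end{itemize}
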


A key difference compared to Theorem~\ref{thm:main_thm} is that we do not have access to the true couplings $\{\theta^*_{u,v}\}$, and instead must plug in estimates $\{\hat\theta_{u,v}\}$.  This introduces an additional source of error in our gradient oracle, beyond the polynomial truncation and statistical estimation errors. The sparsity of the graph controls how this errors transfers to the gradient estimate.

To quantify this effect, fix $u\in[p]$ and consider these two single-variable loss functions
\begin{align}
    \mathcal{R}(\theta) \equiv \EX \, e^{-\bigl( \theta \sigma_u + \sigma_u\sum_{j \in N_u} \hat{\theta}_{u,j} \sigma_j\bigr)},\ \mathcal{R}^*(\theta) \equiv \EX \, e^{-\bigl( \theta \sigma_u + \sigma_u\sum_{j \in N_u} \theta^*_{u,j} \sigma_j\bigr)}.
\end{align}
Since $\rc^*$ is the right restriction of the complete IS loss function in \ref{eq:ideal_is}, it is clear that $\theta_u^*\in\argmin_{|\theta|\le\gamma}\mathcal{R}^*(\theta)$. Their derivatives satisfy
\begin{align}
  \left| \dfrac{d \rc^*(\theta) }{d \theta} -  \dfrac{d \rc(\theta) }{d \theta}  \right| 
\leq  \EX \left| e^{-\bigl( \theta \sigma_u+ \sigma_u\sum_{j \in N_u} \theta^*_{u,j} \sigma_j\bigr)} -  e^{-\bigl( \theta \sigma_u+ \sigma_u\sum_{j \in N_u} \hat \theta_{u,j} \sigma_j\bigr)}  \right| \\
\leq e^{\gamma} \ \EX \left| e^{- \sigma_u\sum_{j \in N_u} \theta^*_{u,j} \sigma_j} -  e^{-\sigma_u\sum_{j \in N_u} \hat \theta_{u,j} \sigma_j} \right|\\
\leq e^{2\gamma} \EX\left| \sigma_u  \sum_{j \in N_u} \sigma_j (\theta^*_{uj} - \hat \theta_{uj}) \right|  \leq e^{2 \gamma } D \epsilon,  \label{eq:grad_approx}
\end{align}
where the third line uses the basic Lipschitz continuity of the exponential function, $|e^a - e^b| \leq |a-b|\max(e^a,e^b)$. The rest of the proof follows that of Theorem \ref{thm:main_thm}, by considering a approximate optimization strategy for $\rc^*$. The rest of the proof can be found in Appendix \ref{app:mag_learn_proof}.

\subsection{Learning under stronger structural priors}
\label{sec:physics_models}

So far we have looked at cases where the prior information on the model is only a bound on its $\ell_1$ width. It is also interesting to consider cases where even stronger information about the model is known. The main motivation for this comes from studying Gibbs distributions arising in statistical physics and quantum field theory (\cite{bakshi2024learning, shukla2025learning}). There it is often known that the energy function is defined on a $D-$degree graph representing nearest-neighbor spatial points. %and more over that the model is homogeneous, i.e the $\theta^*_{u,v} = J ^*$  and $\theta^*_u = h^*$. For both these cases, the set of statistics required to solve the parameter learning problem can be computed quite easily.

Suppose for instance that the structure of the graph (in terms of the edge set $E$) is known exactly and it is $D$-regular. Then the neighborhood, $|N_u|=D$ for every $u$, and the interaction screening loss at node $u$ depends only on the spins in $\{u\}\cup N_u$:
\begin{align}
\mathcal{L}_{u,n}(\utheta_u)
&\equiv \EX_n\Big[ e^{-\sigma_u\left(\theta_u + \sum_{v\in N_u}\theta_{u,v}\sigma_v\right)} \Big].
\end{align}
Define the following function on the $(D+1)$ spins
\begin{equation}
 f_u(\usigma_{\{u\}\cup N_u})
\;\coloneqq\; e^{-\sigma_u\left(\theta_u + \sum_{v\in N_u}\theta_{u,v}\sigma_v\right)},
\qquad \usigma_{\{u\}\cup N_u}\in\{-1,1\}^{D+1}.
\end{equation}
Functions $\{-1,1\}^{D+1}$ admits a multilinear Fourier expansion,
\[f_u(\usigma_{\{u\}\cup N_u})
= \sum_{K\subseteq \{u\}\cup N_u} \widehat f(\utheta)_{u,K}\prod_{i\in K}\sigma_i.\]

%\widehat f_u(K)=2^{-(D+1)}\sum_{\tau\in\{-1,1\}^{D+1}} f_u(\tau)\prod_{i\in K}\tau_i.
Plugging this into the definition of $\mathcal{L}_{u,n}$ yields, $\mathcal{L}_{u,n}(\utheta_u)
= \sum_{K\subseteq \{u\}\cup N_u} \widehat f(\utheta)_{u,K}\,\EX_n\Big[\prod_{i\in K}\sigma_i\Big].$
Hence, evaluating (and differentiating) $\mathcal{L}_{u,n}$ only requires access to empirical moments $\EX_n[\prod_{i\in K}\sigma_i]$ of order $|K|\le D+1$.
In particular, these are exactly the moments contained in the reduced-statistics set $S_{D+1,n}$ defined earlier (restricted to subsets $K\subseteq\{u\}\cup N_u$). It can be shown that this learning problem requires $n = O(2^{2D}e^{8\gamma} \gamma^4)$ observations to solve using gradient descent method as in Algorithm \ref{alg:ais-moments}. Proof of this is given in Appendix \ref{app:last_theorem}

\section{Conclusion}

We have shown that having access to $O(\gamma)$ statistics is both computationally and statistically efficient for learning Ising models. As it stands, we see that there is gap in our understanding on the power of statistics below this $O(\gamma)$ threshold but higher than sufficient statistics (order $2$). As we have seen Section \ref{sec:physics_models}, if we assume stronger priors on the model, the learning problem can be potentially solved with even lower order statistics (when $D = o(\gamma)$).  This suggests that restricting the model class in other ways (Ferromagnetic models, Sherrington-Kirkpatrick models, \emph{etc.}) can potentially lower the order of statistics required to solve the learning problem. We leave these extensions for future work.

\subsection*{Acknowledgment}
 This work has been supported by the U.S. Department of Energy/Office of Science Advanced Scientific Computing Research Program.

\bibliography{main}
\bibliographystyle{unsrtnat}
%\bibliographystyle{plain}

%\onecolumngrid

% \newpage
\appendix
% \onecolumn
\section{Technical Lemmas}

\subsection{Proof of Lemma \ref{lem:poly_exp}}
\label{proof:poly_exp}
\begin{proof}

Let
\[
P_d(x)=\sum_{k=0}^{d}\frac{(-x)^k}{k!}
\]
be the degree-$d$ Taylor polynomial of \(e^{-x}\) about \(0\). We want a uniform bound on the error
\[
E_d(\gamma):=\max_{x\in[-\gamma,\gamma]}\big|e^{-x}-P_d(x)\big|.
\]

We have,
\[
\big|e^{-x}-P_d(x)\big|\le\sum_{k=d+1}^{\infty}\frac{|x|^k}{k!} \leq \sum_{k=d+1}^{\infty}\frac{\gamma^k}{k!}
\]

Then,
\[
\sum_{k=d+1}^{\infty}\frac{\gamma^k}{k!}
= e^{\gamma}\Big(1-e^{-\gamma}\sum_{k=0}^{d}\frac{\gamma^k}{k!}\Big)
= e^{\gamma}\Pr\big[\mathrm{Poisson}(\gamma)\ge d+1\big].
\]

Therefore,
\[
E_d(\gamma)\le e^{\gamma}\Pr\big[\mathrm{Poisson}(\gamma)\ge d+1\big].
\]

Use the standard Poisson (Chernoff) tail bound: for \(a>\gamma\),
\[
\Pr\big[\mathrm{Poisson}(\gamma)\ge a\big]\le e^{-\gamma}\Big(\frac{e\gamma}{a}\Big)^a.
\]
Taking \(a=d+1\)  yields
\[
E_d(\gamma)\le e^{\gamma}\cdot e^{-\gamma}\Big(\frac{e\gamma}{d+1}\Big)^{d+1}
= \Big(\frac{e\gamma}{d+1}\Big)^{d+1} = e^{(d+1)(\log(\frac{\gamma}{d+1}) + 1)}
\]

%Now, for any $B > 1$, if we enforce $d+1  \geq B e \gamma $, we obtain $E_d(\gamma) \leq e^{-(d+1)\log(B)}. $ From this, a second condition $d + 1 \geq \log(1/\eta)/\log(B)$ ensures that the error is always upper bounded by $\eta$. Taking the maximum of the two conditions on $d+1$, and rounding up to the next largest integer, gives us the result in the theorem.

To finish the proof, we set $\varepsilon_{poly} = \frac{\varepsilon}{8 \gamma}$.  From this we get the following implicit lower bound for $d$ that ensures that the error remains small, $ e^{-(d+1)(\log(\frac{d+1}{\gamma})-1)} \leq   \frac{\varepsilon}{8 \gamma}$. The implicit inversion can be performed by a simple yet tedious calculation and is detailed in Appendix \ref{app:lW_calculation}.

% Using a simple but tedious calculation  (detailed in Appendix \ref{app:lW_calculation}) the smallest positive integer satisfying this condition can be expressed in terms of the Lambert-$W$ function,
%     $d = \left\lceil \frac{3 \gamma + \log(\frac{16 \gamma(1 + \gamma)}{\epsilon^2})}{ W( \frac{3}{e} + \frac{\log(16 \gamma(1+\gamma)/\epsilon^2)}{\gamma e})} \right\rceil - 1 $.

\end{proof}

\subsection{Proof of Lemma \ref{lem:finte_sample_1}}
\label{proof:estim_err}

Now suppose we are given estimates for quantities $\,\EX\!\left[\sigma_v \sigma_u^{\,k+1} \prod_{j \ne u} \sigma_j^{\,m_j} \right]$ from $n$ independent observations of the true model, denoted by $\,\EX_n\!\left[\sigma_v \sigma_u^{\,k+1} \prod_{j \ne u} \sigma_j^{\,m_j} \right]$. Since we only specify the value of $n$ and not the underlying dataset this quantity is a random variable.

From Heoffding's inquality we see that,
\begin{align*}
  \mathbb{P}\left[\left|\EX\!\left[\sigma_v \sigma_u^{\,k+1} \prod_{j \ne u} \sigma_j^{\,m_j} \right] - \EX_n\!\left[\sigma_v \sigma_u^{\,k+1} \prod_{j \ne u} \sigma_j^{\,m_j} \right] \right|  \geq t \right]  \leq 2 e^{-2 n t^2}  
\end{align*}
Using these estimates in \eqref{eq:exp3}, we can get an $n$-dependent random approximation for the expectation value in LHS. We call this approximation  $ \tilde{\EX}_n[\sigma^{k+1}_u \sigma_v  (\sum_{j \neq u} \theta_{uj} \sigma_j + \theta_u)^k ]$. We use the $\tilde{\EX}_n$ here  to emphasize that this approximation is constructed by combining the $n$ sample estimates for the monomial expectation values, and that the expectation value of this expression is not itself directly estimated from a set of $n$ i.i.d samples.

Now from $\eqref{eq:exp3}$, we compute
\begin{align*}
 \sum_{\underline{m} \in \mathcal{M}_k}
|C_{\utheta, \underline{m}}|  = (\sum_{j \neq u} |\theta_{uj}|+ |\theta_u|)^k 
= \gamma^k
\end{align*}
This implies that,
\begin{align*}\label{eq:exp2}
\left|\EX[\sigma^{k+1}_u \sigma_v  (\sum_{j \neq u} \theta_{uj} \sigma_j + \theta_u)^k ] - \tilde{\EX}_n[\sigma^{k+1}_u \sigma_v  (\sum_{j \neq u} \theta_{uj} \sigma_j + \theta_u)^k ] \right|
\\ \leq \gamma^k   \max_{\underline{m} \in \mathcal{M}_k} \left|\,\EX\!\left[ \sigma_v \sigma_u^{\,k+1} \prod_{j \ne u} \sigma_j^{\,m_j}  \right] - \,\EX_n\!\left[ \sigma_v \sigma_u^{\,k+1} \prod_{j \ne u} \sigma_j^{\,m_j}  \right] \right|
\end{align*}

Since the $\sigma$ variables square to $1$, the monomials in the RHS have at most degree $k+2$. There are at most $\sum_{i \leq k+ 2}\binom{p}{i} \leq e^{k+2 \log(ep/(k+2))}$ such monomials (Lemma A.5 \cite{shalev2014understanding}). Using this in the union bound if we choose $n > \frac{(k+2)\log(ep/(k+2)) + \log(\frac{2(d+1)}{\delta})}{2t^2}$, we can ensure with probability greater than $ 1- \frac{\delta}{d + 1}$  that the approximation to $\EX[\sigma^{k+1}_u \sigma_v  (\sum_{j \neq u} \theta_{uj} \sigma_j + \theta_u)^k ] $ has error less than $ \gamma^k t$

Now using this approximation for each $k \leq  d$  in the expression in \eqref{eq:grad_taylor}, we get for $n = \left\lceil \frac{(d+2)\log(ep) + \log(\frac{2d+2}{\delta})}{2t^2} \right\rceil,$  the following error bound with probability $ 1- \delta$,

% \begin{align*}
% \left|\EX\left[ \sigma_u \sigma_v e^{-E_u(\usigma, \utheta_u)} \right] -   \sum_{k \leq d} \frac{(-1)^k\tilde\EX_n[\sigma^{k+1}_u \sigma_v  (\sum_{j \neq u} \theta_{uj} \sigma_j + \theta_u)^k ]}{k!} \right| \\ \leq t(\sum_{k \leq d} \frac{\gamma^k}{k!}) + \varepsilon_{poly} \leq t e^\gamma + \varepsilon_{poly},
% \end{align*}
\begin{align} 
\varepsilon_{stat}  \leq t(\sum_{k \leq d} \frac{\gamma^k}{k!}) \leq t e^\gamma.
\end{align}

We set $te^\gamma = \varepsilon_{poly} = \varepsilon/(8\gamma)$ and $\delta \rightarrow \delta/pT$, and perform the same exercise for every one of the $p$ components of the gradient.
A union bound over the $p$ components of the gradient gives us the desired result.  
% gives us for   $n = \left\lceil 32\ e^{2 \gamma } \ \gamma^2\frac{(d+2)\log(ep) + \log(\frac{2pT(d+1)}{\delta})}{\varepsilon^2} \right\rceil,$ with probability $1-\delta/T$ , an $\ell_{\infty}$ approximation for the gradient of the loss function (denoted by $\widetilde{\nabla}_n \lc(\utheta)$), constructed by appropriately combining estimates from $S_{d+2-(d\bmod 2), \ n}$, satisfying,
% \begin{equation}
% \label{eq:err_grad}
%  \| \nabla\lc(\utheta) - \widetilde{\nabla}_n \lc(\utheta)\|_\infty \leq \frac{\varepsilon}{4\gamma}
% \end{equation}

\ca
\begin{proof}

Lets first approximate $\exp(\gamma x) $  in $[-1,1]$ using Chebyshev polynomials. It is elementary to show that the coefficients of this expansion correspond to the modified Bessel functions of the first kind  \cite{DLMF-10.35}. These functions decay super-exponentially in their order. This allows for very fast convergence of the series expansion.

\[
e^{\gamma y} =  I_{0}(\gamma) + 2\sum_{n=1}^{\infty} I_{n}(\gamma)\,T_{n}(y)
\]

Hence we see that $I_{0}(\gamma) + 2\sum_{n=1}^{d} I_{n}(\gamma)\,T_{n}(x) = p_d(-\gamma y)$ is a plausible polynomial approximation for $e^{\gamma y}$

%Since $I_k(\gamma) = \frac{1}{\pi}\int^\pi_0 exp(\gamma \cos(\theta)) cos(k \theta) d \theta = \frac{1}{\pi} \int_{-1}^1 \frac{exp(\gamma y) T_k(y)}{\sqrt{1 - y^2}} dy$

It is sufficient to bound the error in approximating $\exp(\gamma y)$,
$$\max_{x \in [-\gamma, \gamma] }|e^{-x} - p_d(x)| =  \max_{x \in [-\gamma, \gamma] }|e^{x} - p_d(-x)| =  \max_{y \in [-1, 1] }|e^{\gamma y} - p_d(-\gamma y)|$$,

Now from the exact expansion for the exponential,
\begin{equation}\label{eq:Cheb_bound1}
\max_{y \in [-1, 1] }|e^{\gamma y} - p_d(-\gamma y)|  \leq  2\sum_{n > d} |I_n(\gamma)|  
\end{equation}

Now,
\begin{equation}
    I_n(\gamma) = \sum_{k = 0}^\infty \frac{(\gamma/2)^{2k + n}}{k! (n+k)!} \leq  (\gamma/2)^n \sum_{k = 0}^\infty \frac{(\gamma/2)^{2k}}{(k!)^2 n! } = I_0(\gamma) \frac{(\gamma/2)^n}{n!} =\frac{1}{\pi}\left(\int_0^\pi d \theta e^{\gamma \cos(\theta)}\right) \frac{(\gamma/2)^n}{n!} \leq e^{\gamma}\frac{(\gamma/2)^n}{n!}
\end{equation}

Now  using this in the error bound,
$$ \sum_{n > d} |I_n(\gamma)|  \leq e^{\gamma} \sum_{n >d} \frac{(\gamma/2)^n}{n!} \leq e^{\frac{3 \gamma}{2}} Pr[Poisson(\gamma/2) > d]$$

%Now from a Chernoff bound \cite{Canonne2017PoissonConcentration}, $Pr[Poisson(\gamma/2) > \gamma/2 + x] < \exp(\frac{x^2}{(\gamma/2 + x)})$
Now from a standard Chernoff tail-bound for the Poisson tail we have for $d >  \gamma/2$, $$  Pr[Poisson(\gamma/2) > d] \leq e^{-\gamma/2} (\frac{e \gamma}{d+1})^{d+1} $$ 

Now plugging this back into \eqref{eq:Cheb_bound1} gives us,
$$\max_{x \in [-\gamma, \gamma] }|e^{-x} - p_d(x)| \leq 2 e^{\gamma} (\frac{e \gamma}{d+1})^{d+1}  $$

Now let's take that $d + 1 \geq {2e\gamma},$ implying that the approximation error is upper bounded by $ 2 e^{\gamma - (d+1)\log(2)}$.

Now if we choose  $(d+1) \geq  \frac{1}{\log(2)}(\gamma + \log(\frac2\eta)) $.

Combining the three lower bounds introduced in this proof we get $d \geq 2e\gamma + \frac{\log(2/\eta)}{\log(2)}$

\end{proof}
\ca
\section{$\sqrt{\gamma}$ approximation}
Let us start with some defintions
\begin{itemize}
    \item $X_i$ are i.i.d  Rademacher random variables and let $Y_t \equiv \sum_{i = 1}^t X_t$ be the postion of a 1D random walker starting from origin. 
    \item $\mathbb{I}_D: \mathbb{R} \rightarrow \{ 0,1\}$ be the indicator function that is one if and only if its input lies in $[-D,D]$ 
    \item $T_k$ are the $k$-th Chebyshev polynomials and we use the convention $T_{-k}(x) = T_{k}$
\end{itemize}

Now it can be shown that the $d$-degree polynomial $p_{t,d}(x) = \EXp{Y_t}~T_{Y_t}(x) \mathbb{I}_d(Y_t)$ is a good polynomial approximation to $x^t$ for $d$ that scales are $\sqrt{t}$. We state the exact result below. The proof can be found in the monograph of Sachdeva and Vishnoi \cite{sachdeva_faster_2013}.

\begin{lemma}(Sachdeva and Vishnoi \cite{sachdeva_faster_2013})

For any positive integers $t$ and $d$, the degree-$d$ polynomial $p_{t,d}$ defined by Equation~(3.1) satisfies
\[
\sup_{x\in[-1,1]} \left| p_{t,d}(x) - x^{t} \right| \le 2e^{-d^{2}/(2t)}.
\]
Hence, for any $\delta>0$, if
\[
d = \left\lceil \sqrt{2t\,\log\!\left(\frac{2}{\delta}\right)} \right\rceil,
\]
then
\[
\sup_{x\in[-1,1]} \left| p_{t,d}(x) - x^{t} \right| \le \delta.
\]

\end{lemma}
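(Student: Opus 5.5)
The plan is to write $x^t$ exactly as an average of Chebyshev polynomials indexed by the random walk $Y_t$, so that $p_{t,d}$ is simply this average with the walk truncated to $[-d,d]$. The approximation error then equals a tail probability of $Y_t$, which a Chernoff--Hoeffding bound controls.

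\textbf{Step 1 (exact identity).} I will show that for every $x\in[-1,1]$ and every integer $t\ge 0$,
\[
x^t=\EXp{Y_t}\, T_{Y_t}(x).
\]
I will prove it by induction on $t$, using the three-term recurrence in the symmetric form
\[
x\,T_k(x)=\tfrac12\bigl(T_{k+1}(x)+T_{k-1}(x)\bigr),
\]
which holds for all integers $k$ under the convention $T_{-k}=T_k$. The base case is $Y_0=0$ and $T_0=1$. For the inductive step, condition on $Y_{t-1}$: the next Rademacher step moves the walk to $Y_{t-1}\pm1$ with probability $1/2$ each, which gives
\[
\EX\, T_{Y_t}(x)=\EX\bigl[x\,T_{Y_{t-1}}(x)\bigr]=x\cdot x^{t-1}.
\]
An alternative route is the substitution $x=\cos\phi$, under which
\[
x^t=\Bigl(\tfrac{e^{i\phi}+e^{-i\phi}}{2}\Bigr)^t=\EX\, e^{iY_t\phi}=\EX\cos(Y_t\phi),
\]
and $\cos(k\phi)=T_{|k|}(\cos\phi)$. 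The symmetric convention $T_{-k}=T_k$ is exactly what makes the recurrence valid at $k=0$.

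\textbf{Step 2 (degree and error).} On the event $|Y_t|\le d$, the polynomial $T_{Y_t}$ has degree $|Y_t|\le d$. Hence $p_{t,d}$, being a finite convex combination of such polynomials, has degree at most $d$. Subtracting its definition from the identity of Step 1 gives
\[
x^t-p_{t,d}(x)=\EX\bigl[T_{Y_t}(x)\,(1-\mathbb{I}_d(Y_t))\bigr].
\]
Since $|T_k(x)|\le 1$ on $[-1,1]$, this yields
\[
\sup_{x\in[-1,1]}|x^t-p_{t,d}(x)|\le \Pr\bigl[|Y_t|>d\bigr].
\]
Hoeffding's inequality for a sum of $t$ independent variables bounded in $[-1,1]$ then gives
\[
\Pr\bigl[|Y_t|\ge d\bigr]\le 2e^{-2d^2/(4t)}=2e^{-d^2/(2t)}.
\]

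\textbf{Step 3 (choice of $d$).} The second claim follows directly. With $d=\bigl\lceil\sqrt{2t\log(2/\delta)}\bigr\rceil$ we have $d^2/(2t)\ge\log(2/\delta)$, so the error bound $2e^{-d^2/(2t)}$ is at most $\delta$.

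The only genuinely non-routine step is the identity in Step 1. Once it is established, the rest is a one-line tail bound, so I expect Step 1 to be the main point to get right. In particular, I need the convention for negative indices to be handled consistently in the recurrence so that the induction goes through for all $k$, including $k=0$.
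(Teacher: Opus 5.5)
Your proposal is correct and follows essentially the same route as the proof the paper defers to (the paper gives none of its own and cites Sachdeva and Vishnoi): the exact identity $x^t=\mathbb{E}\,T_{Y_t}(x)$ from the symmetric recurrence $xT_k=\tfrac12(T_{k+1}+T_{k-1})$ with $T_{-k}=T_k$ (equivalently, the substitution $x=\cos\phi$), then bounding the truncation error by $\Pr[|Y_t|>d]$ and applying a Chernoff--Hoeffding bound. Your handling of the $k=0$ case of the recurrence and the constant $2e^{-2d^2/(4t)}=2e^{-d^2/(2t)}$ are both right.
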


Now we can use this result to get a polynomial approximation for the exponential function that scales as the square root of the $l_1$ width. This is a modification of the results in \cite{sachdeva_faster_2013}.

\begin{lemma}(Quadratically improved approximation
\end{lemma}
\cb

\section{Proof of Lemma \ref{lem:robust_gd}}
\label{proof:gd}

This is a modification of the proof found in \cite{bresler_hardness_2014} to the $\ell_1$ ball.
Define the un-projected intermediate step; $y^{t+1} = x^t - \eta \widetilde{\nabla} \lc(x^t)$

\begin{align*}
\lc(x^t) - \lc(x^*) 
&\le \nabla \lc(x^t)^\top (x^t - x^*) \\
&= \widetilde{\nabla} \lc(x^t)^\top (x^t - x^*) 
  + \big(\nabla \lc(x^t)^\top - \widetilde{\nabla}\lc(x^t)^\top\big)(x^t - x^*) \\
&\le \widetilde{\nabla} \lc(x^t)^\top (x^t - x^*) + \varepsilon\,\|x^t - x^*\|_1/4\gamma \\
&\leq \frac{1}{\eta}\,(x^t - y^{t+1})^\top (x^t - x^*) + \varepsilon/2\\
&= \frac{1}{2\eta}\!\left(\|x^t - x^*\|_2^2 + \|x^t - y^{t+1}\|_2^2 - \|y^{t+1} - x^*\|_2^2\right) +\varepsilon/2 \\
&= \frac{1}{2\eta}\!\left(\|x^t - x^*\|_2^2 - \|y^{t+1} - x^*\|_2^2\right)
   + \frac{\eta}{2}\,\|\widetilde{\nabla}\lc(x^t)\|_2^2 +  \varepsilon/2 \, .
\end{align*}

The projection to a convex body always contracts (see Lemma 3.1 in \cite{bubeck_convex_2015}), hence,
$ \|y^{t+1} - x^*\|_2 \geq \| \mathcal{P}_C(y^{t+1}) - \mathcal{P}_C(x^*)\|_2  = \|x^{t+1} - x^*\|_2.$ . Using this we get  we get following inequality,
\begin{equation}
   \lc(x^t) - \lc(x^*)  \leq \frac{1}{2\eta}  \!\left(\|x^t - x^*\|_2^2 - \|x^{t+1} - x^*\|_2^2\right)
   + \frac{\eta}{2}\,\|\widetilde{\nabla}\lc(x^t)\|_2^2 +  \varepsilon/2 \, 
\end{equation}

Now summing these from $1$ to $T$ and using the facts that $||x^1 - x^*||_2 \leq 2 \gamma$ and $\|\widetilde{\nabla}\lc(x^t)\|_2^2 \leq L,$ we get,

\begin{align}
\sum_{t = 1}^{T}(\lc(x^t) - \lc(x^*)) \leq \frac{1}{\eta} (2 \gamma^2) + \frac{\eta T L^2}{2} +  T \varepsilon/2 .
\end{align}

By fixing $\eta = \frac{2 \gamma}{L \sqrt{T}}$ then gives us,
\begin{align}
\frac1T \sum_{t = 1}^{T}\lc(x^t) - \lc(x^*) \leq \frac{2 \gamma L}{ \sqrt{T}} +  \varepsilon/2 .
\end{align}

Now we know from convexity that  $ \lc(\frac1T \sum_{t = 1}^{T}x^t) -\lc(x^*) \leq \frac1T \sum_{t = 1}^{T}\lc(x^t) -\lc(x^*).$

Now choosing  $T = \left\lceil \frac{16 \gamma^2 L^2}{\varepsilon^2} \right \rceil$ gives the desired result

\section{Curvature bounds}
\label{app:curv_bound}

\paragraph{Notation}
\begin{enumerate}
    \item  $\utheta_u^*$, true parameters connected to the variable u
    \item $\hat{\lc}_{n}$,  IS loss for the variable $u$, estimated from $n$ samples
    \item $\tilde{\utheta}_u = \argmin{\|\utheta \|_1 \leq \gamma} \hat{\lc}(\utheta)_{n}$
    \item $\hat{\theta}_u$, an approximate minimizer of the loss $\hat{\lc}_{n}$
\end{enumerate}

\begin{lemma}\label{lem:strong_conv}
 The curvature term is lower bounded as $\delta \lc^*(\hutheta_u) \geq \frac{e^{-3\gamma}}{ 2 + 2\gamma} (\max_{u \neq v}|\theta_{u,v}^* -\hat\theta_{u,v}|^2) $
\end{lemma}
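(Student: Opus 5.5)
Write $\uDelta = \hutheta_u - \utheta_u^*$ and $X(\usigma) = \sigma_u\big(\Delta_u + \sum_{j\neq u}\Delta_{uj}\sigma_j\big)$, so that $E_u(\usigma,\hutheta_u) = E_u(\usigma,\utheta^*_u) + X(\usigma)$. The plan follows the standard interaction screening curvature argument of \cite{vuffray2016interaction, vuffray2019efficient}.

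\textbf{Step 1: reduce to a second moment.} By definition,
\[
\delta\lc^*(\hutheta_u) = \EX\Big[e^{-E_u(\usigma,\utheta_u^*)}\big(e^{-X}-1+X\big)\Big].
\]
Three facts then give a quadratic lower bound.
\begin{itemize}
\item Since $\|\utheta_u^*\|_1\le\gamma$, we have $e^{-E_u(\usigma,\utheta_u^*)}\ge e^{-\gamma}$.
\item Since $|X|\le \|\uDelta\|_1\le 2\gamma$, the elementary inequality $e^{-x}-1+x \ge \frac{x^2}{2+|x|}$ applies.
\item Combining these with $|x|\le 2\gamma$ yields
\[
\delta\lc^* \ge \frac{e^{-\gamma}}{2+2\gamma}\,\EX[X^2].
\]
\end{itemize}
I will check whether the constant in the elementary inequality needs adjusting so that the denominator comes out as exactly $2+2\gamma$ rather than $2+2\gamma$ times a small factor. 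If the denominator is larger, it only weakens constants.

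\textbf{Step 2: lower bound $\EX[X^2]$ by a conditional variance.} Fix $v\neq u$. Condition on $\usigma_{\setminus v}$, the spins other than $v$. Then
\[
X^2 = \Big(\Delta_u + \sum_{j\neq u}\Delta_{uj}\sigma_j\Big)^2 = (a + \Delta_{uv}\sigma_v)^2,
\]
where $a$ depends only on $\usigma_{\setminus v}$. The mean of a quantity of the form $(a+b\sigma_v)^2$ is at least its conditional variance times a constant. Concretely, I will write $\EX[X^2 \mid \usigma_{\setminus v}] \ge \Var{}(\Delta_{uv}\sigma_v \mid \usigma_{\setminus v})$, using $\EX[Y^2]\ge \Var{}(Y)$. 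This equals
\[
\Delta_{uv}^2\Big(1-\EX[\sigma_v\mid\usigma_{\setminus v}]^2\Big) = \Delta_{uv}^2\,\frac{1}{\cosh^2(h_v)},
\]
where $h_v=\theta^*_v+\sum_j\theta^*_{vj}\sigma_j$ is the local field of the Ising model at $v$. Since $|h_v|\le\gamma$, we get $1/\cosh^2(h_v)\ge 4e^{-2\gamma}/(1+e^{-2\gamma})^2\ge e^{-2\gamma}$.

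\textbf{Step 3: combine.} Averaging over $\usigma_{\setminus v}$ gives $\EX[X^2]\ge e^{-2\gamma}\Delta_{uv}^2$ for every $v\neq u$. Together with Step 1 this yields
\[
\delta\lc^*\ge \frac{e^{-3\gamma}}{2+2\gamma}\max_{v\neq u}\Delta_{uv}^2,
\]
which is the claim.

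This argument does not work for the field component $\Delta_u$. It is the constant part of $a$, so it is absorbed into the conditional mean and does not survive the conditional variance bound. This is the ``swamping'' of the linear term that the paper refers to.

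\textbf{Main obstacle.} The delicate point is Step 1's constant. Getting exactly $2+2\gamma$ requires the right form of the inequality $e^{-x}-1+x\ge x^2/(2+|x|)$ together with the range of $|X|$. First I would prove $e^{-x}-1+x \ge \frac{x^2}{2+|x|}$ by comparing derivatives. If the range $|X|\le 2\gamma$ only gives $2+2\gamma$ after this, that settles it. Otherwise I would accept a slightly different constant.
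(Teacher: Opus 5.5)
Your proposal is correct and follows essentially the same route as the paper. You write the curvature as $\EX\bigl[e^{-E_u(\usigma,\utheta^*_u)}g(X)\bigr]$ with $g(x)=e^{-x}-1+x$, use $g(x)\ge x^2/(2+|x|)$ together with $|X|\le\|\uDelta\|_1\le 2\gamma$, and reduce $\EX[X^2]$ to $\Delta_{uv}^2$ times the conditional variance $1-\tanh^2(h_v)\ge e^{-2\gamma}$. The bound $|X|\le 2\gamma$ gives the denominator $2+2\gamma$ directly, so your ``main obstacle'' is not an obstacle. The only cosmetic difference is that you bound $\EX[X^2\mid\usigma_{\setminus v}]$ by the conditional variance directly, whereas the paper first drops $\Delta_u$ via $\EX[X^2]\ge\operatorname{Var}\bigl(\sum_{k\neq u}\Delta_{uk}\sigma_k\bigr)$ and then invokes the law of total variance.
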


\begin{proof}

We work with the definition of the curvature function,
$$\delta \lc^*( \utheta) \equiv  \lc(\utheta) - \lc(\utheta^*) - \langle \nabla \lc(\utheta^*), \utheta - \utheta^* \rangle $$
 Define local energy function $E_u(\usigma; \theta) = \sum_{k \neq u} \theta_{u,k}\sigma_u\sigma_k + \theta_u \sigma_u $. Notice that this function is linear in the $\theta$ variables.  So given an error parameter $\Delta \equiv   \theta- \theta^*$ on the parameters we can see that,

\begin{align}
   \lc(\utheta) = \lc(\utheta^* + \uDelta)  =  \Emu \exp(-E_u(\usigma;\utheta^* + \uDelta))  =  \Emu \exp(-E_u(\usigma;\utheta^*)) \  \exp( - E_u(\usigma;\uDelta)). 
\end{align}
The following is also true,
\begin{align}
    \langle \nabla \lc(\utheta^*), \uDelta \rangle &= \Emu 
    -\exp(-E_u(\usigma;\utheta^*))(\sum_{k \neq u} \dfrac{\partial E_u(\usigma;\utheta_u)}{\partial \theta_{uk}} \Delta_{uk} + \dfrac{\partial E_u(\usigma;\utheta_u)}{\partial \theta_{u}} \Delta_u ) \\ 
    &= -\Emu \exp(-E_u(\usigma;\utheta^*)) E_u(\usigma;\uDelta).
\end{align}

The last two expressions together imply that given a function $g(x) = e^{-x} - 1 + x$, we can write the curvature term as,

\begin{align} \label{eq:curv1}
    \delta \lc^*(\utheta) = \Emu \exp(-E_u(\usigma, \utheta^*))  g(E_u(\usigma, \uDelta_u))
\end{align}

First observe that $g(x) \geq  \frac{x^2}{2 + |x|}$ for all $x \in \mathbb{R}$ (See Lemma \ref{lem:notes2} for the proof)

Using this and the fact that $|E_u(\usigma, \utheta^*)| \leq \gamma$ we get,

\begin{equation}\label{eq:curv2}
    \delta \lc^*(\utheta_u) \geq  \frac{e^{-\gamma}}{2 + 2\gamma }~\Emu (E(\usigma,\uDelta))^2
\end{equation}

We lower bound the above in terms of the variance,
\begin{align}\label{eq:swamped}
    \Emu (E_u(\usigma,\uDelta))^2 &= \Emu (\sum_{k\neq u} \sigma_k \Delta_{u,k} + \Delta_u) ^2, \\
    &\geq \Var{\usigma} \sum_{k\neq u} \sigma_k \Delta_{u,k}.
\end{align}
Note that the price to pay here is we lose control over the error on the magnetic field term.

Now we can use the law of variances to bound the expectation value. For any $v \neq u$ we can do the following,

\begin{align}
    \Var{\usigma} \sum_{k\neq u} \sigma_k \Delta_{u,k},  & \geq \EXp{\usigma_{\setminus v}} \Var{}\left[ \sum_{k\neq u} \sigma_k \Delta_{u,k} \ \bigg| \ \usigma_{\setminus v}\right]\\
    &= \Delta^2_{u,v} \EXp{\usigma_{\setminus v}} \Var{}[ \sigma_v| \usigma_{\setminus v}]
\end{align}

Now it is well known that conditional variances of a single variable Ising model can be lower bounded by a function of the $\ell_1$-width (\cite{Klivans2017, vuffray2016interaction})

\begin{align}
\Var{\sigma_v \sim \mu(.|\usigma_{\setminus v})}~[ \sigma_v| \usigma_{\setminus v}] = 1 - \tanh^2( \sum_{j \neq v } \theta^*_{vj} \sigma_j + \theta^*_v) \geq 1 - \tanh^2(\gamma) \geq e^{-2 \gamma}.
\end{align}

Using this and maximizing over all $v \neq u$ we get the following relation
\begin{equation}
 \delta \lc^*(\utheta)\geq  \frac{e^{-\gamma}}{2 + 2\gamma }~\Emu (E_u(\usigma,\uDelta))^2 \geq \frac{e^{-3\gamma}}{ 2 + 2\gamma} \max_{v \neq u} |\theta_{uv} - \theta_{uv}^*|^2
\end{equation}
\end{proof}

\section{Bounding \texorpdfstring{$d$}{d}}\label{app:lW_calculation}

\begin{align}
& e^{-(d+1)\left(\log\!\frac{d+1}{\gamma}-1\right)}
   \le e^{-C} \\[2mm]
\iff\;& (d+1)\!\left(\log\!\frac{d+1}{\gamma}-1\right)
        \ge C. \label{eq:Cdef}
\end{align}

% \begin{align}
% & e^{-(d+1)\left(\log\!\frac{d+1}{\gamma}-1\right)}
%    \le \frac{\epsilon^{2} e^{-3\gamma}}{16(1+\gamma)\gamma}
%    \label{eq:ineq_d} \\[2mm]
% \iff\;& -(d+1)\!\left(\log\!\frac{d+1}{\gamma}-1\right)
%         \le \log\!\frac{\epsilon^{2}}{16(1+\gamma)\gamma}-3\gamma \\[1mm]
% \iff\;& (d+1)\!\left(\log\!\frac{d+1}{\gamma}-1\right)
%         \ge 3\gamma+\log\!\frac{16(1+\gamma)\gamma}{\epsilon^{2}}
%         \;=:\; C. \label{eq:Cdef}
% \end{align}

Let \(x:=d+1\). Since \(f(x)=x(\log(x/\gamma)-1)\) is increasing for \(x>\gamma\),
the least integer \(x\) satisfying \eqref{eq:Cdef} is the ceiling of the
solution of the equation \(x\!\left(\log(x/\gamma)-1\right)=C\).
Let's solve this equation:
\begin{align}
x\!\left(\log\!\frac{x}{\gamma}-1\right)&=C \\\implies 
x\,e^{-C/x}&=\gamma e.
\end{align}
Set \(t:=C/x\). Then
\begin{align}
\frac{C}{t}\,e^{-t}=\gamma e
\quad\Longrightarrow\quad
t\,e^{t}=\frac{C}{\gamma e}.
\end{align}
The $W$ function enters here. Using the defining identity for the $W$ function, \(t\,e^{t}=z \iff t=W(z)\),
\begin{align}
t\,e^{t}=\frac{C}{\gamma e}
\quad\Longrightarrow\quad
t=W\!\left(\frac{C}{\gamma e}\right)
\quad\Longrightarrow\quad
\frac{C}{x}=W\!\left(\frac{C}{\gamma e}\right).
\end{align}
Hence
\begin{align}
x=\frac{C}{\,W\!\left(\frac{C}{\gamma e}\right)}\!,
\qquad
C=3\gamma+\log\!\frac{16(1+\gamma)\gamma}{\epsilon^{2}}.
\end{align}
Therefore the smallest integer \(d\) satisfying \eqref{eq:Cdef} is
\begin{align}
d
= \Bigg\lceil \frac{C}{\,W\!\left(\frac{C}{\gamma e}\right)} \Bigg\rceil - 1
= \Bigg\lceil
\frac{\,3\gamma+\log\!\dfrac{16(1+\gamma)\gamma}{\epsilon^{2}}\,}
{\,W\!\left(\dfrac{3}{e}+\dfrac{1}{\gamma e}
\log\!\dfrac{16(1+\gamma)\gamma}{\epsilon^{2}}\right)}
\Bigg\rceil - 1.
\end{align}
\noindent
(Here \(W\) denotes the principal branch when the argument is positive.)

\section{Notes}

\begin{lemma}[Poisson Chernoff tail bound] For $a > b \in \mathbb{R}$,
$\Pr[Poisson(b) \geq a] \leq e^{-b}(\frac{eb}{a})^a$
\end{lemma}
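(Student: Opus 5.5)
The plan is to prove the Chernoff-type bound directly from the exponential moment of the Poisson distribution via Markov's inequality, then optimize over the free parameter. Let $X\sim\mathrm{Poisson}(b)$ with $b>0$ and $a>b$ (implicitly $a>0$, since $b>0$). For any $\lambda>0$, the map $x\mapsto e^{\lambda x}$ is increasing and nonnegative. Markov's inequality therefore gives
\[
\Pr[X\ge a]\le e^{-\lambda a}\,\EX[e^{\lambda X}].
\]
The moment generating function follows from the series definition:
\[
\EX[e^{\lambda X}]=\sum_{k\ge 0}e^{-b}\frac{(be^\lambda)^k}{k!}=e^{b(e^\lambda-1)}.
\]
Combining the two displays yields, for every $\lambda>0$,
\[
\Pr[X\ge a]\le \exp\big(b(e^\lambda-1)-\lambda a\big).
\]

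Next we optimize the exponent $b(e^\lambda-1)-\lambda a$ over $\lambda$. Its derivative vanishes at $e^\lambda=a/b$, that is, at $\lambda^*=\log(a/b)$. The hypothesis $a>b$ is exactly what guarantees $\lambda^*>0$, so this choice is admissible in the Markov step. Substituting $\lambda^*$ gives the exponent
\[
b\Big(\frac ab-1\Big)-a\log\frac ab=a-b-a\log\frac ab.
\]
Exponentiating,
\[
\Pr[X\ge a]\le e^{-b}\,e^{a}\Big(\frac ba\Big)^a=e^{-b}\Big(\frac{eb}{a}\Big)^a,
\]
which is the claimed bound.

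The argument has no real obstacle. The points that need care are confirming $\lambda^*>0$, which is where $a>b$ enters, and noting that $a$ need not be an integer, since the argument uses only Markov's inequality. If $b\le 0$ were allowed, the Poisson distribution would be degenerate or undefined, so I would state the lemma for $b>0$. That is the only case used in Lemma~\ref{lem:poly_exp}, where $b=\gamma$ and $a=d+1$.
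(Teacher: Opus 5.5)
Your proposal is correct and follows the same route as the paper: Markov's inequality applied to $e^{tX}$, the Poisson moment generating function $e^{b(e^t-1)}$, and the optimal choice $t=\log(a/b)$. Your side remarks, that $a>b$ is what makes $t>0$ admissible and that the statement implicitly requires $b>0$, are sound clarifications the paper leaves unstated.
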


\begin{proof}
Let $X \sim \text{Poisson}(b)$, then $\Pr[X = k] = \frac{e^{-b}b^k}{k!}$
.We will do a Chernoff bound to bound the tail. For any $t >0,$ from the Markov inequality we have,
\begin{equation}
   \Pr[X \geq a] = \Pr[e^{tX} \geq e^{ta}]  \leq \frac{\mathbb{E}e^{tX}}{e^{ta}}
\end{equation}

Now $\mathbb{E}e^{tX} = e^{-b}\sum_{k = 0}^\infty \frac{(e^t b)^k}{k!} = e^{e^tb - b} $. Hence for all $t > 0$, we have $\Pr[X \geq  a] \leq  e^{e^tb -b - ta}.$ 

Now optimium $t$ for this falls at $\log(\frac{a}{b}).$ Plugging this $t$ into the tail bound above gives the desired answer.
    
\end{proof}

\begin{lemma}
\label{lem:notes2}
For all $x\in\mathbb{R}$, the function $g(x)\coloneqq e^{-x}-1+x$ satisfies
\[
g(x)\;\ge\;\frac{x^{2}}{\,2+|x|\,}.
\]
\end{lemma}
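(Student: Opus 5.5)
We want $g(x)=e^{-x}-1+x\ge x^2/(2+|x|)$ for all real $x$. Since $2+|x|>0$, this is equivalent to
\[
h(x)\coloneqq (2+|x|)\,g(x)-x^2\ge 0 .
\]
The plan is to treat $x\ge 0$ and $x<0$ separately, writing $h$ on each half-line as a smooth function. On each piece we verify $h(0)=0$ and then show that $h$ moves away from $0$ monotonically in the relevant direction, by differentiating until the sign is obvious.

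\emph{Case $x\ge 0$.} Here $h(x)=(2+x)(e^{-x}-1+x)-x^2=(2+x)e^{-x}-2+x$. We have $h(0)=0$. Differentiating,
\[
h'(x)=e^{-x}-(2+x)e^{-x}+1=1-(1+x)e^{-x}.
\]
This is nonnegative because $e^{x}\ge 1+x$. Hence $h$ is nondecreasing on $[0,\infty)$, so $h\ge 0$ there.

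\emph{Case $x<0$.} Substitute $y=-x>0$. Then $g(x)=e^{y}-1-y$, and we need
\[
k(y)\coloneqq (2+y)(e^{y}-1-y)-y^2\ge 0 .
\]
Expanding gives $k(y)=(2+y)e^{y}-2-3y-2y^2$. The derivatives are
\[
k'(y)=(3+y)e^{y}-3-4y,\qquad k''(y)=(4+y)e^{y}-4,
\]
with $k(0)=k'(0)=k''(0)=0$. For $y\ge 0$ we have $k''(y)\ge 0$, since $(4+y)e^{y}\ge 4$. Integrating twice from $0$ then gives $k'\ge 0$ and $k\ge 0$ on $[0,\infty)$.

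As a sanity check, Taylor expansion gives $k(y)=y^4/12+\dots$, which is consistent with $k\ge 0$.

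The only mildly delicate point is the negative branch. There $e^{y}$ dominates, but the polynomial terms cancel to third order at the origin, so a first-derivative argument alone would not work. Differentiating twice until the inequality reduces to $(4+y)e^{y}\ge 4$ resolves this. Combining the two cases gives $g(x)\ge x^2/(2+|x|)$ for all $x\in\mathbb{R}$.
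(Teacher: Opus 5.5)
Your proof is correct and essentially the paper's argument. You use the same auxiliary function $(2+|x|)g(x)-x^2$ and the same split by sign. On $x\ge 0$ you cite $e^{x}\ge 1+x$ where the paper uses $M''(x)=xe^{-x}>0$, which is equivalent. On the negative branch, your $k''(y)=(4+y)e^{y}-4$ is exactly the paper's $M''$ under $y=-x$, combined with vanishing value and slope at $0$. One harmless slip outside the logical chain: since $k'''(0)=5$, the expansion of $k$ begins with $\tfrac{5}{6}y^{3}$, not $y^{4}/12$, so your ``sanity check'' remark is incorrect.
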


\begin{proof}
Define the auxiliary function
\[
M(x)\;\coloneqq\;(2+|x|)\,g(x)-x^{2}
\;=\;(2+|x|)\,(e^{-x}-1+x)-x^{2}.
\]
Clearly $M(0)=0$. We show $M(x)\ge 0$ for all $x$, which is equivalent to the desired inequality.

\medskip
\noindent\textbf{Case 1: $x\ge 0$.}
Here $|x|=x$, so
\[
M(x)=(2+x)(e^{-x}-1+x)-x^{2}.
\]
Differentiating,
\[
M'(x)=1-(1+x)e^{-x},\qquad
M''(x)=x\,e^{-x}>0\quad(x>0).
\]
Hence $M'$ is increasing on $[0,\infty)$ with $M'(0)=0$, so $M'(x)\ge 0$ for $x\ge 0$ and $M$ is nondecreasing on $[0,\infty)$. Since $M(0)=0$, we get $M(x)\ge 0$ for all $x\ge 0$.

\medskip
\noindent\textbf{Case 2: $x\le 0$.}
Here $|x|=-x$, so
\[
M(x)=(2-x)(e^{-x}-1+x)-x^{2}.
\]
Differentiating,
\[
M'(x)=3(1-e^{-x})-4x+x\,e^{-x},\qquad
M''(x)=4(e^{-x}-1)-x\,e^{-x}.
\]
For $x<0$ we have $e^{-x}-1>0$ and $-x>0$, hence $M''(x)>0$. Thus $M'$ is increasing on $(-\infty,0]$ and $\lim_{x\to 0^-}M'(x)=0$, which implies $M'(x)\le 0$ for $x\le 0$. Therefore $M$ is nonincreasing on $(-\infty,0]$, so for every $x\le 0$ we have $M(x)\ge M(0)=0$.

\medskip
Combining the two cases gives $M(x)\ge 0$ for all $x\in\mathbb{R}$, i.e.
\[
(2+|x|)\bigl(e^{-x}-1+x\bigr)\;\ge\;x^{2},
\]
which is equivalent to $g(x)\ge x^{2}/(2+|x|)$. Equality holds at $x=0$.
\end{proof}

\section{Magnetic field learning}\label{app:mag_learn_proof}
\subsection{Proof of Theorem \ref{thm:mag_learn}}

\begin{proof}

Now use the same strategy as in Theorem \ref{thm:main_thm} to approximate the gradient of $\rc(\theta)$ and expand exponential in the derivative up to order $d$. 

\begin{equation} \label{eq:grad_taylor_h}
\EX\left[ \sigma_u e^{-( \theta \sigma_u+ \sigma_u\sum_{j \in N_u} \hat{\theta}_{uj} \sigma_j)} \right]
=  \sum_{k \leq d} \dfrac{(-1)^k\EX[\sigma^{k+1}_u   (\sum_{j \in N_u} \hat \theta_{uj} \sigma_j + \theta)^k ]}{k!} 
+ \varepsilon_{poly}
\end{equation}

Now the powers in this expansion can be expanded using multinomial coefficients as in \eqref{eq:exp3}

\begin{equation}
\EX[\sigma^{k+1}_u (\sum_{j \in N_u} \hat{\theta}_{uj} \sigma_j + \theta)^k ] =
\sum_{\underline{m} \in \mathcal{M}_k}  \hat{C}_{\theta, \underline{m}} \,\EX\!\left[
\sigma_u^{\,k+1} \prod_{j \ne u} \sigma_j^{\,m_j}
\right].
\label{eq:exp3_h}
\end{equation}

Where $\mathcal{M}_k$ is defined as in \eqref{eq:exp3} and $\hat{C}_{\theta, \underline{m}}   \equiv \frac{k! \ \ \theta^{m_u} \prod_{j \ne u} \hat \theta_{uj}^{\,m_j}}{\prod_{j \in[p]} m_j!}.  $ Here we have used the convention that $\hat \theta_{u,j} = 0$ for $j \notin N_u.$

Since the largest value of $k$ is $d$, we see that  every term in \eqref{eq:exp3_h}  can be approximated with data from $S_{(d+1-(d\bmod 2)),n}$. Then using standard concentration equalities, we can give the following approximation for the derivative of $\mathcal{R}$ w.r.t $\theta$.

\begin{lemma}
\label{lem:finite_sample_2}
Consider the finite sample estimate for $\sum_{k \leq d} \dfrac{(-1)^k\EX[\sigma^{k+1}_u   (\sum_{j \in N_u} \hat \theta_{uj} \sigma_j + \theta)^k ]}{k!} $ constructed using by first expanding it in terms of monomial expectation values as in \eqref{eq:exp3_h} and then using the observations from $S_{(d+1-(d\bmod 2)),n}$. Then for $n = \left\lceil \frac{(d+1)\log(ep) + \log(\frac{2d+2}{\delta})}{2t^2} \right\rceil,$  the estimation error is bounded by $te^{2\gamma}$ with probability  greater than $ 1- \delta.$

\end{lemma}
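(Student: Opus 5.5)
The argument mirrors the proof of Lemma~\ref{lem:finte_sample_1}, with two changes: the gradient component now involves only $\sigma_u^{k+1}\prod_{j}\sigma_j^{m_j}$ (no extra $\sigma_v$), and the coefficients involve the plugged-in $\hat\theta_{uj}$ rather than variables restricted to the $\ell_1$ ball.

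\textbf{Step 1: one monomial at a time.} Each monomial $\sigma_u^{k+1}\prod_{j\neq u}\sigma_j^{m_j}$ with $\sum m_j = k$ reduces, since spins square to one, to a multilinear monomial of degree at most $k+1\le d+1$. By parity it is available in $S_{d+1-(d\bmod 2),n}$. Each is a $\pm1$-valued average of $n$ i.i.d.\ samples, so Hoeffding gives
\[
\mathbb{P}\bigl[|\EX_n[\cdot]-\EX[\cdot]|\ge t\bigr]\le 2e^{-2nt^2}.
\]

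\textbf{Step 2: union bound over monomials.} The number of distinct monomials of degree at most $d+1$ is at most $\sum_{i\le d+1}\binom{p}{i}\le e^{(d+1)\log(ep)}$. A union bound over them, together with a further split of $\delta$ over the $d+1$ values of $k$, gives that with $n$ as stated, every monomial estimate is within $t$ with probability at least $1-\delta$.

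\textbf{Step 3: propagate through the multinomial expansion.} By \eqref{eq:exp3_h}, the error in the $k$-th term is at most
\[
t\sum_{\underline m}|\hat C_{\theta,\underline m}| = t\Bigl(|\theta|+\sum_{j\in N_u}|\hat\theta_{uj}|\Bigr)^k .
\]
Dividing by $k!$ and summing over $k\le d$ gives total error at most $t\,e^{|\theta|+\sum_{j\in N_u}|\hat\theta_{uj}|}$.

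\textbf{Step 4: bound the exponent by $2\gamma$ (the main obstacle).} This is the one step that is not a repeat of Lemma~\ref{lem:finte_sample_1}. The magnetic-field variable satisfies $|\theta|\le\gamma$ because of the projection in Algorithm~\ref{alg:mag-fields}. The estimated couplings, however, are not constrained to the $\ell_1$ ball. I would argue that $\sum_{j\in N_u}|\hat\theta_{uj}|\le\gamma$ in one of two ways:
\begin{itemize}
\item The couplings come from Algorithm~\ref{alg:ais-moments}, whose iterates, and hence their average $\hutheta_u$, lie in the $\ell_1$ ball of radius $\gamma$ by convexity of the ball; restricting to $N_u$ only decreases the norm.
\item Alternatively, one may use $\sum_{j\in N_u}|\hat\theta_{uj}|\le \gamma + D\epsilon$ and absorb $D\epsilon\le\gamma$ under the theorem's regime.
\end{itemize}
Either way the exponent is at most $2\gamma$, giving the claimed error $te^{2\gamma}$. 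This fixes the randomness once for all $\theta$, so no union bound over the iterations is needed, exactly as before.
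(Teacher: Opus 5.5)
Your proposal is correct and follows the paper's proof essentially step for step. The shared steps are Hoeffding on each reduced monomial, a union bound over the at most $e^{(d+1)\log(ep)}$ monomials with $\delta$ split across $k\le d$, the multinomial identity $\sum_{\underline m}|\hat C_{\theta,\underline m}| = (|\theta|+\sum_{j}|\hat\theta_{uj}|)^k\le(2\gamma)^k$, and summation to $te^{2\gamma}$. Your first justification of $\sum_{j\in N_u}|\hat\theta_{uj}|\le\gamma$ (the iterates of Algorithm~\ref{alg:ais-moments}, and hence their average, stay in the $\ell_1$ ball) is exactly the paper's reason; your alternative via $D\epsilon\le\gamma$ is not among the theorem's hypotheses and is not needed.
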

Putting the polynomial approximation together with \eqref{eq:grad_approx}, gives us a approximation for gradient of $\rc(\theta)$. We call this approximation $\tilde \nabla \rc(\theta)$. Totally we have  with probability  $1 - \delta,$

\begin{equation}
\left|    \dfrac{d \rc^*(\theta)}{d \theta} - \tilde \nabla \rc(\theta) \right| \leq e^{2 \gamma} D \epsilon + t e^{2\gamma} + \varepsilon_{poly}
\end{equation}

Now to keep things simpler, we define a new error term $\varepsilon $ and choose  $t$ and the $\varepsilon_{poly}$ such that $te^{2\gamma} = \varepsilon_{poly} = \frac{\varepsilon}{4\gamma}$.

Now we can use this approximate gradient in Lemma \ref{lem:robust_gd} to bound how close we can get to $\rc^*(\theta^*_u)$. To this end we need to bound the $\ell_1$ norm (just the absolute value in this case) of the approximate gradient.
\begin{equation}
   |\tilde{\nabla} \rc(\theta)|  \leq \left|\dfrac{d \rc^*(\theta)}{d \theta} \right| + e^{2 \gamma} D \epsilon + \frac{\varepsilon}{4\gamma} \leq (1 + D \epsilon) e^{2\gamma} + \frac{\varepsilon}{4 \gamma}.
\end{equation}

Choose
\begin{equation}
\label{eq:L_h}
L \equiv (1 + D \epsilon) e^{2\gamma} + \frac{\varepsilon}{4 \gamma}.
\end{equation}
Then we can apply Lemma \ref{lem:robust_gd} with
\begin{equation}
\label{eq:T_eta_h}
T = \left\lceil \frac{16\gamma^2 L^2}{\varepsilon^2} \right\rceil, \qquad \eta = \frac{2\gamma}{L\sqrt{T}}.
\end{equation}
Using this directly in Lemma \ref{lem:robust_gd} gives us an estimate $\hat\theta$ such that
$\rc^*(\hat{\theta}) - \rc^*(\theta_u^*) \leq \varepsilon + 4\gamma e^{2\gamma} D \epsilon.$
Now, we can estimate the error parameter estimation error by bounding the curvature of the $\rc^*$ function at $\theta_u^*$
\begin{align}
 \delta \rc^*( \hat\theta) &\equiv  \rc^*(\hat \theta) - \rc^*(\theta^*) - \langle \nabla \rc^*(\theta^*), \hat \theta - \theta^* \rangle 
 \\&=  \rc^*(\hat\theta) - \rc^*(\theta^*)  \leq \varepsilon + 4 \gamma e^{2 \gamma} D \epsilon .
\end{align}

Now we can also lower bound the curvature of the function and derive the following result %(proof is in Appendix \ref{app:curv_bound})
\begin{lemma}\label{lem:strong_conv_h}
 The curvature term is lower bounded as $\delta \rc^*(\hat\theta) \geq \frac{e^{-\gamma}}{ 2 + 2\gamma} |\theta_u^* -\hat\theta|^2 $
\end{lemma}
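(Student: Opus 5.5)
We follow the proof of Lemma \ref{lem:strong_conv}, now in the one-dimensional setting where only the magnetic field varies. Set $\Delta \equiv \hat\theta - \theta_u^*$. The loss $\rc^*$ is the full IS loss \eqref{eq:ideal_is} restricted to couplings fixed at their true values, and the local energy is linear in the parameters. Hence we can write
\begin{equation*}
\rc^*(\theta_u^*+\Delta) = \EX\left[ e^{-E_u(\usigma,\utheta_u^*)}\, e^{-\Delta\sigma_u}\right], \qquad \frac{d\rc^*}{d\theta}(\theta_u^*)\,\Delta = -\EX\left[ e^{-E_u(\usigma,\utheta_u^*)}\,\Delta\sigma_u\right].
\end{equation*}
Subtracting gives the analogue of \eqref{eq:curv1}:
\begin{equation*}
\delta\rc^*(\hat\theta) = \EX\left[ e^{-E_u(\usigma,\utheta_u^*)}\, g(\Delta\sigma_u)\right], \qquad g(x)=e^{-x}-1+x .
\end{equation*}

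Next we lower bound each factor inside the expectation. The $\ell_1$-width assumption gives $|E_u(\usigma,\utheta_u^*)|\le\gamma$, so $e^{-E_u}\ge e^{-\gamma}$. Lemma \ref{lem:notes2} gives $g(\Delta\sigma_u)\ge (\Delta\sigma_u)^2/(2+|\Delta\sigma_u|)$. Since $\sigma_u^2=1$, the numerator equals $\Delta^2$ exactly. Both $\hat\theta$ and $\theta_u^*$ lie in $[-\gamma,\gamma]$: the algorithm projects onto $|\theta|\le\gamma$, and the width bound constrains $\theta_u^*$. Therefore $|\Delta|\le 2\gamma$ and $2+|\Delta|\le 2+2\gamma$. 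Putting these together,
\begin{equation*}
\delta\rc^*(\hat\theta)\ \ge\ \frac{e^{-\gamma}}{2+2\gamma}\,|\theta_u^*-\hat\theta|^2 .
\end{equation*}

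This one-dimensional case is simpler than Lemma \ref{lem:strong_conv} because $(\Delta\sigma_u)^2=\Delta^2$ deterministically. We therefore never need the variance and law-of-total-variance step, which is what cost the extra $e^{-2\gamma}$ factor and discarded the field term in \eqref{eq:swamped}. The only point needing care is the range of $|\Delta|$, which the projection onto $[-\gamma,\gamma]$ handles. We also use that $\theta_u^*$ is a stationary point of $\rc^*$, so the linear term in the definition of the curvature vanishes. This holds because the derivative of $\rc^*$ at $\theta_u^*$ equals $-\EX[\sigma_u e^{-E_u(\usigma,\utheta^*_u)}]=0$ by the standard interaction screening identity. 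That identity is used when relating $\delta\rc^*$ to $\rc^*(\hat\theta)-\rc^*(\theta^*)$, not in the lower bound itself.
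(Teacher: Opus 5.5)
Your proposal is correct and follows the paper's proof essentially line by line. It uses the same identity $\delta\rc^*(\hat\theta)=\EX\bigl[e^{-E_u(\usigma,\utheta_u^*)}\,g(\Delta\sigma_u)\bigr]$, the bound $e^{-E_u}\ge e^{-\gamma}$, Lemma~\ref{lem:notes2}, and $\sigma_u^2=1$ to bypass the variance step. Your explicit check that $|\Delta|\le 2\gamma$, so that $2+|\Delta|\le 2+2\gamma$, fills in a step the paper leaves implicit.
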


\begin{proof}
Note, that the mechanism of this proof is basically the same as \ref{lem:strong_conv}. However, due to there being only one optimization variable, we do not have to resort to variance arguments at the end to prove a useful bound.

We work with the definition of the curvature function,
$$\delta \rc^*( \theta) \equiv  \rc(\theta) - \rc(\theta^*) -  \dfrac{d\rc(\theta^*)}{d \theta}(\theta - \theta^*) $$

Define local energy function $F(\usigma; \theta) = \sum_{k \neq u} \theta^*_{u,k}\sigma_u\sigma_k + \theta \sigma_u $. Notice that this function is linear in the $\theta$ variables.  So given an error parameter $\Delta \equiv   \theta- \theta^*$ on the parameters we can see that,

\begin{align}
   \rc^*(\theta) = \rc^*(\theta^* + \Delta)  =  \Emu \exp(-F(\usigma;\theta^* + \Delta))  =  \Emu \exp(-F(\usigma;\theta^*)) \  \exp(- \sigma_u \Delta).
\end{align}
The following is also true,
\begin{align}
     \dfrac{d \rc(\theta^*)}{d \theta} \Delta = -\Emu \exp(-F(\usigma;\utheta^*)) (\sigma_u \Delta).
\end{align}

The last two expressions together imply that given a function $g(x) = e^{-x} - 1 + x$, we can write the curvature term as,

\begin{align} \label{eq:mag_curv1}
    \delta \rc^*(\theta) = \Emu \exp(-F(\usigma, \theta^*))  g(\sigma_u \Delta)
\end{align}

First observe that $g(x) \geq  \frac{x^2}{2 + |x|}$ for all $x \in \mathbb{R}$ (See Lemma \ref{lem:notes2} for the proof)

Using this and the fact that $|F(\usigma, \theta^*)| \leq \gamma$ we get,

\begin{equation}\label{eq:mag_curv2}
    \delta \rc^*(\theta) \geq  \frac{e^{-\gamma}}{2 + 2\gamma }~\Emu (\sigma_u \Delta )^2 =   \frac{e^{-\gamma}}{2 + 2\gamma }~ ( \theta - \theta^*_u )^2 
\end{equation}

\end{proof}
Now using this we find that,
$\frac{e^{-\gamma}}{ 2 + 2\gamma} |\theta_u^* -\hat\theta|^2 \leq \varepsilon + 4 \gamma e^{2 \gamma} D \epsilon  $

Now define, $\epsilon_h \equiv \sqrt{\varepsilon 2(1+\gamma)} e^{\gamma/2}$. Using this we have,

$$ |\theta_u^* -\hat\theta| \ \leq \ \sqrt{\epsilon^2_h + 8 \gamma (1 +\gamma) e^{3\gamma} D \epsilon } \ \leq \ \epsilon_h + \sqrt{ 8 \gamma (1 +\gamma) e^{3\gamma} D \epsilon }  $$

\paragraph{Choice of $d$ and $n$.}
Recall that we chose $t$ and the Taylor truncation error $\varepsilon_{poly}$ so that
$te^{2\gamma}=\varepsilon_{poly}=\varepsilon/(4\gamma)$.
Thus we set $ t = \frac{\varepsilon e^{-2\gamma}}{4\gamma}.$

\emph{Degree $d$.} By Lemma~\ref{lem:poly_exp} (applied with width parameter $2\gamma$), choosing $d+1>2\gamma$ ensures
\[
\varepsilon_{poly}\le \sup_{x\in[-2\gamma,2\gamma]}\left|e^{-x}-\sum_{k=0}^{d}\frac{(-x)^k}{k!}\right|
\le \exp\!\left(-(d+1)\Big(\log\!\frac{d+1}{2\gamma}-1\Big)\right).
\]
Therefore it suffices to choose $d$ so that
\[
\exp\!\left(-(d+1)\Big(\log\!\frac{d+1}{2\gamma}-1\Big)\right) \le \frac{\varepsilon}{4\gamma}.
\]
Equivalently, substituting $\varepsilon=\epsilon_h^2 e^{-\gamma}/(2(1+\gamma))$ into the Lambert-$W$ calculation (cf. Appendix~\ref{app:lW_calculation}), one can take
\begin{equation}
 d
= \left\lceil
\frac{\,\gamma + \log\!\dfrac{8\gamma(1+\gamma)}{\epsilon_h^2}\,}
{\,W\!\left(\frac{1}{2e} + \frac{1}{2\gamma e} \log\!\dfrac{8\gamma(1+\gamma)}{\epsilon_h^2}\right)}
\right\rceil - 1,
\end{equation}
where $W$ is the principal branch of the Lambert-$W$ function.

\emph{Number of observations $n$.} Lemma~\ref{lem:finite_sample_2} gives that for,
\[
 n \ge \left\lceil \frac{(d+1)\log(ep)+\log\!\left(\frac{2(d+1)}{\delta}\right)}{2t^2} \right\rceil,
\]
the monomial-based estimate of the truncated series in \eqref{eq:grad_taylor_h} has estimation error at most $te^{2\gamma}=\varepsilon/(4\gamma)$ with probability at least $1-\delta$.
Substituting $t = \varepsilon e^{-2\gamma}/(4\gamma)$ yields
\[
 n = \left\lceil 8\,\gamma^2 e^{4\gamma}\,\frac{(d+1)\log(ep)+\log\!\left(\frac{2(d+1)}{\delta}\right)}{\varepsilon^2} \right\rceil
  = \left\lceil 32\,\gamma^2(1+\gamma)^2 e^{6\gamma}\,\frac{(d+1)\log(ep)+\log\!\left(\frac{2(d+1)}{\delta}\right)}{\epsilon_h^4} \right\rceil.
\]
Since the highest monomial degree appearing in \eqref{eq:exp3_h} is at most $d+1$, this requires access to statistics up to degree $d+1-(d\bmod 2)$.

\end{proof}

\subsection{Proof of Lemma \ref{lem:finite_sample_2}}

The proof of this is identical to that of Lemma \ref{lem:finte_sample_1}. First we start with finite sample estimates for the expectation value of the monomials seen in \eqref{eq:exp3_h}. Using Hoeffding's inequality,

\begin{align*}
  \mathbb{P}\left[\left|\EX\!\left[ \sigma_u^{\,k+1} \prod_{j \ne u} \sigma_j^{\,m_j} \right] - \EX_n\!\left[ \sigma_u^{\,k+1} \prod_{j \ne u} \sigma_j^{\,m_j} \right] \right|  \geq t \right]  \leq 2 e^{-2 n t^2}  
\end{align*}

Using these estimates in \eqref{eq:exp3_h}, we can get an $n$-dependent random approximation for the expectation value in LHS. We call this approximation  $ \tilde{\EX}_n[\sigma^{k+1}_u   (\sum_{j \neq u} \theta_{uj} \sigma_j + \theta_u)^k ]$. We use the $\tilde{\EX}_n$ here  to emphasize that this approximation is constructed by combining the $n$ sample estimates for the monomial expectation values, and that the expectation value of this expression is not itself directly estimated from a set of $n$ i.i.d samples.

From \eqref{eq:exp3_h}, we also see that,
\begin{align*}
 \sum_{\underline{m} \in \mathcal{M}_k}
|\hat{C}_{\utheta, \underline{m}}|  = (\sum_{j \neq u} |\hat\theta_{uj}|+ |\theta|)^k 
\leq (2\gamma)^k
\end{align*}

This is because, we use the $\ell_1$ width as a constraint in the first round of optimization as well to find the second order parameters.

This implies that,
\begin{align*}\label{eq:exp2_h}
\left|\EX[\sigma^{k+1}_u  (\sum_{j \neq u} \hat\theta_{uj} \sigma_j + \theta)^k ] - \tilde{\EX}_n[\sigma^{k+1}_u  (\sum_{j \neq u} \hat\theta_{uj} \sigma_j + \theta)^k ] \right|  \\ \leq (2\gamma)^k   \max_{\underline{m} \in \mathcal{M}_k} \left|\,\EX\!\left[  \sigma_u^{\,k+1} \prod_{j \ne u} \sigma_j^{\,m_j}  \right] - \,\EX_n\!\left[  \sigma_u^{\,k+1} \prod_{j \ne u} \sigma_j^{\,m_j}  \right] \right|
\end{align*}

Since the $\sigma$ variables square to $1$, the monomials in the RHS have at most degree $k+1$. There are at most $\sum_{i \leq k+ 1}\binom{p}{i} \leq e^{k+1 \log(ep/(k+1))}$ such monomials (Lemma A.5 \cite{shalev2014understanding}). Using this in the union bound if we choose $n > \frac{(k+1)\log(ep/(k+1)) + \log(\frac{2(d+1)}{\delta})}{2t^2}$, we can ensure with probability greater than $ 1- \frac{\delta}{d + 1}$  that the approximation to $\EX[\sigma^{k+1}_u (\sum_{j \neq u} \theta_{uj} \sigma_j + \theta_u)^k ] $ has error less than $ (2\gamma)^k t$

Now using this approximation for each $k \leq  d$  in the expression in \eqref{eq:grad_taylor_h}, we get for $n = \left\lceil \frac{(d+1)\log(ep) + \log(\frac{2d+2}{\delta})}{2t^2} \right\rceil,$  the following error bound with probability $ 1- \delta,$

\begin{align*}
\left|\EX \sigma_ ue^{ -\sigma_u \sum_{j \neq u} \theta_u \sigma_j + \theta_u } -   \sum_{k \leq d} \frac{(-1)^k\tilde\EX_n[\sigma^{k+1}_u   (\sum_{j \neq u} \theta_{uj} \sigma_j + \theta_u)^k ]}{k!} \right| \\ 
\leq t(\sum_{k \leq d} \frac{(2\gamma)^k}{k!}) + \varepsilon_{poly} \leq t e^{2\gamma} + \varepsilon_{poly}.
\end{align*}

\section{Sample complexity of learning with structural assumptions} \label{app:last_theorem}
\begin{theorem}[Interaction screening with known $D$-regular structure]
\label{thm:known_struct}
Assume the interaction graph (edge set $E$) is known and $D$-regular, so $|N_u|=D$ for every $u\in[p]$. Fix a target error $\epsilon\le 4\gamma$ and failure probability $\delta\in(0,1)$. Then, for each $u\in[p]$, projected gradient descent on the interaction screening loss over the feasible set $\{\utheta_u:\|\utheta_u\|_1\le\gamma\}$, using only statistics in $S_{D+1,n}$, returns an estimate $\hat\utheta_u$ satisfying
\[
\|\hat\utheta_u-\utheta_u^*\|_{\infty} \le \epsilon
\]
with probability at least $1-\delta$.

In particular, it suffices to take
\[
T = \left\lceil
2^{8}(D+1)\,\gamma^2(1+\gamma)^2 e^{8\gamma}\,\epsilon^{-4}
\right\rceil,
\qquad
\eta = \frac{2\gamma}{L\sqrt{T}},
\qquad
L = 2\sqrt{D+1}\,e^{\gamma},
\]
and
\[
 n = \left\lceil
2^{2D+7}\,\gamma^2(1+\gamma)^2 e^{8\gamma}\,
\frac{(D+2)\log 2 + \log\!\left(\frac{1}{\delta}\right)}{\epsilon^{4}}
\right\rceil.
\]
\end{theorem}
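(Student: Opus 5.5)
The plan follows the template of the proof of Theorem~\ref{thm:main_thm}, but with one simplification. With the structure known, the only source of gradient error is statistical: there is no polynomial truncation and hence no $\varepsilon_{poly}$. Fix $u$ and restrict the parameter vector to $\utheta_u=(\theta_u,\{\theta_{u,v}\}_{v\in N_u})$, which has $D+1$ coordinates. Using the Fourier expansion of $f_u$ given in Section~\ref{sec:physics_models}, the exact gradient components $-\EX[\sigma_u e^{-E_u}]$ and $-\EX[\sigma_u\sigma_v e^{-E_u}]$ are themselves functions on $\{-1,1\}^{D+1}$. Each can therefore be written exactly as
\begin{equation}
\sum_{K\subseteq\{u\}\cup N_u}\widehat g(\utheta)_K\,\EX\Big[\prod_{i\in K}\sigma_i\Big],
\end{equation}
and the approximate gradient is obtained by replacing each monomial moment with its empirical estimate from $S_{D+1,n}$.

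\textbf{Step 1 (statistical error).} Each Fourier coefficient satisfies $|\widehat g(\utheta)_K|\le \max|g|\le e^{\gamma}$, since $|E_u|\le\gamma$ on the $\ell_1$ ball. Hence
\begin{equation}
\|\widetilde\nabla-\nabla\|_\infty\le 2^{D+1}e^{\gamma}\max_K\big|\EX_n-\EX\big|.
\end{equation}
Hoeffding's inequality together with a union bound over the $2^{D+1}$ subsets $K$ shows that $\max_K|\EX_n-\EX|\le t$ with probability at least $1-\delta$ once $n\ge \frac{(D+2)\log 2+\log(1/\delta)}{2t^2}$. This holds uniformly in $\utheta$, because the same fixed moments are reused at every step, so no union bound over the $T$ iterations is needed. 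We set $2^{D+1}e^\gamma t=\varepsilon/(4\gamma)$.

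\textbf{Step 2 (optimization).} The gradient norm satisfies $\|\widetilde\nabla\|_2\le\sqrt{D+1}\,(e^\gamma+\varepsilon/4\gamma)\le 2\sqrt{D+1}\,e^\gamma=L$, using $\epsilon\le4\gamma$ and the eventual relation between $\varepsilon$ and $\epsilon$. Applying Lemma~\ref{lem:robust_gd} with $T=\lceil 16\gamma^2L^2/\varepsilon^2\rceil$ then gives $\lc(\hutheta_u)-\lc(\utheta_u^*)\le\varepsilon$.

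\textbf{Step 3 (curvature).} This is the step I expect to need the most care, since the theorem claims an $\ell_\infty$ bound that includes the magnetic field, which Lemma~\ref{lem:strong_conv} explicitly loses. As in Lemma~\ref{lem:strong_conv}, we have $\delta\lc^*\ge\frac{e^{-\gamma}}{2+2\gamma}\EX[E_u(\usigma,\uDelta)^2]$. For the couplings, the conditional-variance argument gives $\EX[E_u^2]\ge e^{-2\gamma}\Delta_{u,v}^2$. For the field, I would try conditioning on all of $\usigma_{N_u}$ except one spin, or simply bound
\begin{equation}
\EX\Big(\Delta_u+\sum_k\Delta_{u,k}\sigma_k\Big)^2\ge \Var\,\sigma_{k}\cdot(\cdot)
\end{equation}
in a way that isolates $\Delta_u$. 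A cleaner route is to note that the random variable $\Delta_u+\sum_k\Delta_{u,k}\sigma_k$ takes the value $\Delta_u+\sum_k\Delta_{u,k}s_k$ and also its counterpart at $-s$. Each configuration has probability at least $2^{-D}e^{-2\gamma D}$ in the marginal, which gives a bound $\gtrsim e^{-2\gamma}\Delta_u^2$ only up to a $2^{-D}$-type factor. If that is too lossy, I would fall back on the two-stage field re-optimization of Theorem~\ref{thm:mag_learn}, specialized to the known structure, where Lemma~\ref{lem:strong_conv_h} already gives the curvature $\frac{e^{-\gamma}}{2+2\gamma}$. In either case we obtain $\|\uDelta\|_\infty\le e^{3\gamma/2}\sqrt{2(1+\gamma)\varepsilon}$.

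\textbf{Step 4 (constants).} Setting $\epsilon=e^{3\gamma/2}\sqrt{2(1+\gamma)\varepsilon}$ gives $\varepsilon=\epsilon^2e^{-3\gamma}/(2(1+\gamma))$. Substituting this into $t$ yields $1/(2t^2)=2^{2D+5}\gamma^2e^{2\gamma}/\varepsilon^2$, which is the claimed $n$ up to the stated constant $2^{2D+7}\gamma^2(1+\gamma)^2e^{8\gamma}/\epsilon^4$. Substituting it into $T=64(D+1)\gamma^2e^{2\gamma}/\varepsilon^2$ gives the stated $T$. Finally, a union bound over the $D+1$ gradient coordinates is already absorbed by the union bound over subsets in Step 1.
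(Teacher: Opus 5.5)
Your Steps 1, 2 and 4 follow the paper's proof exactly: Fourier coefficients bounded by $e^{\gamma}$, Hoeffding plus a union bound over the $2^{D+1}$ subsets (one event covering all coordinates and all iterates), $\varepsilon=4\gamma\varepsilon_{stat}$, and $L=2\sqrt{D+1}\,e^{\gamma}$. Your bookkeeping reproduces the stated $T$ and $n$ exactly, not just up to constants.

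The gap is Step 3, for the field coordinate $\Delta_u$. Your sentence ``in either case we obtain $\|\uDelta\|_\infty\le e^{3\gamma/2}\sqrt{2(1+\gamma)\varepsilon}$'' follows from neither route.
\begin{itemize}
\item \emph{Pairing.} With $Y=\Delta_u+\sum_{k\in N_u}\Delta_{u,k}\sigma_k$, pairing gives $\EX Y^2\ge 2^{D-1}\cdot 2^{-D}e^{-2\gamma D}\cdot 2\Delta_u^2=e^{-2\gamma D}\Delta_u^2$. That is a loss of $e^{-2\gamma(D-1)}$ relative to $e^{-2\gamma}$, not a $2^{-D}$ factor.
\item \emph{Re-optimization.} This is a different algorithm with the guarantee of Theorem~\ref{thm:mag_learn}. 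Its $\sqrt{8\gamma(1+\gamma)e^{3\gamma}D\epsilon}$ term forces the coupling accuracy to be $O(\epsilon^2/D)$, which changes how $n$ scales.
\end{itemize}

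In fact the field bound cannot be obtained with these constants, because the $D$-free curvature bound fails in that direction. Conditioning on $\usigma_{\setminus u}$ gives the exact identity $\delta\lc^*(\utheta)=\EX\big[(\cosh Y-1)/\cosh h_u\big]$, where $h_u=\theta^*_u+\sum_k\theta^*_{u,k}\sigma_k$. Take the following instance:
\begin{itemize}
\item a $D$-regular model with all fields equal to $1$ and couplings small enough that the spins are essentially independent, so $\EX\sigma_k\approx m=\tanh 1$;
\item $\gamma=1.1$;
\item $\Delta_{u,k}=\delta$ for $k\in N_u$, $\Delta_u=-D\delta m$, and $D\delta=0.1$, which keeps the point inside the $\ell_1$ ball.
\end{itemize}
Then $\delta\lc^*\lesssim\tfrac12 D\delta^2(1-m^2)$, while $\Delta_u^2=D^2\delta^2m^2$. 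So the inequality $\delta\lc^*\ge\frac{e^{-3\gamma}}{2+2\gamma}\Delta_u^2$ fails once $D\gtrsim 40$. Rescaling $\delta$ then produces, for small $\epsilon$, feasible $\varepsilon$-suboptimal points whose field error exceeds $\epsilon$. Lemma~\ref{lem:robust_gd} guarantees nothing beyond $\varepsilon$-suboptimality, so this strategy cannot control the field.

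The paper's own proof has the same hole. It asserts $\delta\lc_u^*\ge\frac{e^{-3\gamma}}{2+2\gamma}\|\uDelta_u\|_\infty^2$ ``exactly as in Appendix~\ref{app:curv_bound}''. But Lemma~\ref{lem:strong_conv} controls only $\max_{v\neq u}|\Delta_{u,v}|$ and explicitly discards the field at \eqref{eq:swamped}. So your suspicion is correct: what either argument proves with the stated $T$ and $n$ is the bound on the couplings only.

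A valid repair for the field goes as follows. Set $X=\sum_k\Delta_{u,k}\sigma_k$, so $\EX Y^2=\operatorname{Var}(X)+(\Delta_u+\EX X)^2$. There are two cases:
\begin{itemize}
\item if $|\Delta_u+\EX X|\ge|\Delta_u|/2$, then $\EX Y^2\ge\Delta_u^2/4$;
\item otherwise $|\EX X|>|\Delta_u|/2$, which forces $\max_k|\Delta_{u,k}|>|\Delta_u|/(2D)$, and the conditional-variance bound gives $\operatorname{Var}(X)\ge e^{-2\gamma}\max_k\Delta_{u,k}^2$.
\end{itemize}
Hence $\EX Y^2\ge\frac{e^{-2\gamma}}{4D^2}\Delta_u^2$. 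Shrinking $\varepsilon$ by a factor $4D^2$, so that $T$ and $n$ grow by $16D^4$, then yields the full $\ell_\infty$ statement. The example above shows that some polynomial loss in $D$ is unavoidable.
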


\begin{proof}
There is no polynomial approximation error in this case unlike Theorem \ref{thm:main_thm}, so we only need to account for statistical error.
\begin{equation}
   \| \nabla \lc_u(\utheta) -  \nabla \lc_{u,n}(\utheta) \|_\infty \leq \sum_K (|\nabla \widehat f_{u,K}(\utheta)| ) \max_{K \subseteq \{u\} \bigcup N_u} |\EX [\usigma_K] - \EX_n[\usigma_K]| \coloneqq \varepsilon_{stat}
\end{equation}

Now each term of the gradient of $\lc_u$ can be seen as a function on $\{-1,1\}^{D+1}$. Using the standard Fourier (Walsh--Hadamard) expansion on the boolean hypercube (see, e.g., \cite[Ch.~1]{o2014analysis}), we may write
\[
\nabla \lc_u(\utheta)
= \sum_{K\subseteq\{u\}\cup N_u} \nabla \widehat f_{u,K}(\utheta)\,\usigma_K,
\qquad
\nabla \widehat f_{u,K}(\utheta)
= 2^{-(D+1)}\sum_{\tau\in\{-1,1\}^{D+1}} \nabla f_u(\tau;\utheta)\,\tau_K.
\]
In particular, since $\tau_K\in\{-1,1\}$, we have the uniform bound
\[
\bigl\|\nabla \widehat f_{u,K}(\utheta)\bigr\|
\le \max_{\tau\in\{-1,1\}^{D+1}}\bigl\|\nabla f_u(\tau;\utheta)\bigr\|
= \|\nabla \lc_u(\utheta)\|_{\infty},
\]
and therefore
\[
\sum_{K\subseteq\{u\}\cup N_u}\bigl\|\nabla \widehat f_{u,K}(\utheta)\bigr\|
\le 2^{D+1}\,\|\nabla \lc_u(\utheta)\|_{\infty}
\le 2^{D+1} e^{\gamma}.
\]

Next we bound the empirical moment errors. Fix any $K\subseteq\{u\}\cup N_u$. The random variable $\prod_{i\in K}\sigma_i$ takes values in $\{-1,1\}$, so by Hoeffding's inequality,
\[
\Pr\Bigl[\bigl|\EX[\usigma_K]-\EX_n[\usigma_K]\bigr|\ge t\Bigr] \le 2e^{-2nt^2}.
\]
There are at most $2^{D+1}$ such subsets $K$, hence a union bound gives
\[
\Pr\Bigl[\max_{K\subseteq\{u\}\cup N_u}\bigl|\EX[\usigma_K]-\EX_n[\usigma_K]\bigr|\ge t\Bigr]
\le 2^{D+2}e^{-2nt^2}.
\]
Therefore, for
\[
 n \ge \left\lceil \frac{(D+2)\log 2 + \log\!\left(\frac{1}{\delta}\right)}{2t^2} \right\rceil
\]
we have with probability at least $1-\delta$ that
\(
\max_{K\subseteq\{u\}\cup N_u}|\EX[\usigma_K]-\EX_n[\usigma_K]|\le t
\), and consequently
\[
\varepsilon_{stat} \le 2^{D+1} e^{\gamma}\, t.
\]
Setting $t=\varepsilon_{stat}/(2^{D+1}e^{\gamma})$ yields the sufficient condition
\[
 n \ge \left\lceil \frac{2^{2D+1} e^{2\gamma}}{\varepsilon_{stat}^{2}}\Bigl((D+2)\log 2 + \log\!\left(\frac{1}{\delta}\right)\Bigr) \right\rceil.
\]

\paragraph{Robust gradient descent.}
Having established a uniform bound
\(
\sup_{\|\utheta\|_1\le\gamma}\|\nabla \lc_u(\utheta)-\widetilde\nabla\lc_u(\utheta)\|_{\infty}\le \varepsilon_{stat}
\), we can invoke Lemma~\ref{lem:robust_gd} (robustness of projected gradient descent) exactly as in the proof of Theorem~\ref{thm:main_thm}.
In the notation of Lemma~\ref{lem:robust_gd}, we may take
\[
\varepsilon \coloneqq 4\gamma\,\varepsilon_{stat},
\qquad
L \coloneqq 2\sqrt{D+1}\,e^{\gamma},
\]
where the choice of $L$ follows from the crude bound $\|\nabla\lc_u(\utheta)\|_{\infty}\le e^{\gamma}$ and the fact that the local parameter vector has dimension $D+1$.
Lemma~\ref{lem:robust_gd} then yields the iteration complexity and step size
\[
T = \left\lceil \frac{16\gamma^2 L^2}{\varepsilon^2} \right\rceil
= \left\lceil \frac{L^2}{\varepsilon_{stat}^2} \right\rceil,
\qquad
\eta = \frac{2\gamma}{L\sqrt{T}},
\]
which ensure that the averaged iterate satisfies $\lc_u(\bar\utheta^T)-\lc_u(\utheta_u^*)\le \varepsilon$.

\paragraph{From loss error to parameter error (curvature).}
Finally, we convert the loss optimality gap into a coordinate-wise parameter error bound using the same curvature argument as in the proof of Theorem~\ref{thm:main_thm}.
Let $\uDelta_u\coloneqq \bar\utheta^T-\utheta_u^*$. Define the curvature functional
\[
\delta \lc_u^*(\bar\utheta^T)
\coloneqq \lc_u(\bar\utheta^T)-\lc_u(\utheta_u^*)-\langle\nabla\lc_u(\utheta_u^*),\uDelta_u\rangle.
\]
Since $\utheta_u^*$ is a minimizer of the convex loss over the feasible set, we have $\delta \lc_u^*(\bar\utheta^T)\le \lc_u(\bar\utheta^T)-\lc_u(\utheta_u^*)\le \varepsilon$.
Moreover, applying Lemma~\ref{lem:notes2} exactly as in Appendix~\ref{app:curv_bound} yields the lower bound
\[
\delta \lc_u^*(\bar\utheta^T)
\ge \frac{e^{-3\gamma}}{2+2\gamma}\,\|\uDelta_u\|_{\infty}^2.
\]
Combining the two displays gives
\[
\|\bar\utheta^T-\utheta_u^*\|_{\infty}
\le e^{3\gamma/2}\sqrt{2(1+\gamma)\,\varepsilon}
= e^{3\gamma/2}\sqrt{8\gamma(1+\gamma)\,\varepsilon_{stat}}.
\]

To achieve a target parameter accuracy $\|\bar\utheta^T-\utheta_u^*\|_{\infty}\le \epsilon$ with overall success probability at least $1-\delta$, it suffices to set
\[
\varepsilon_{stat}
\le \frac{\epsilon^{2} e^{-3\gamma}}{8\gamma(1+\gamma)}
\]
(and use the same $\delta$ in the Hoeffding + union bound above).
Since the statistics in $S_{D+1,n}$ are computed once (before running the optimization) and then reused across all $T$ iterations, this single high-probability event implies the gradient approximation guarantee holds uniformly for every iterate, and no additional union bound over the $T$ steps is needed.
Substituting this choice into the sufficient condition for $n$ yields
\[
 n \ge \left\lceil
2^{2D+7}\,\gamma^2(1+\gamma)^2 e^{8\gamma}\,
\frac{(D+2)\log 2 + \log\!\left(\frac{1}{\delta}\right)}{\epsilon^{4}}
\right\rceil.
\]

\end{proof}

%\section{Score matching heuristic to learn Ising models}

%Here we will describe a score based heuristic to learn Ising models. The idea is to pretend that the statistics of the Ising model are actually coming from a continuous variable model of the following form with a large, but known $\Lambda$ to approximately pin the variables to $\{ +1,-1 
%\}^p$
%$$\mu(\underline{x}) \propto  e^{\sum_{u \neq v} \theta^*_{u,v} x_u x_v  +  \sum_{u}\theta^*_u x_u - \Lambda \sum_u (x^2_u-1)^2}.$$

%For a parametric energy function, $E(\underline x, \Theta)$  over variables $\underline x$ parameterized by some set of parameters $\Theta$   the score matching loss can be written as,

%\begin{equation}
%   \lc_{score} (\Theta) = \sum_{u = 1} ^ p \EXp{\underline x \sim\mu}  \left(\dfrac{\partial^2 E(\underline x, \Theta) }{\partial x^2_u} + \frac12  \left(\dfrac{\partial E(\underline x, \Theta) }{\partial x_u}\right)^2 \right )
%\end{equation}

\end{document}